\theoremstyle{definition}
\theoremstyle{remark}
\numberwithin{thm}{section}
\DeclareMathAlphabet{\mathsfsl}{OT1}{cmss}{m}{sl}
\renewcommand{\phi}{\varphi}
\DeclareMathOperator{\tr}{\text{Tr}}
\newcommand{\bZ}{\boldsymbol{Z}}
\newcommand{\bX}{\boldsymbol{X}}
\newcommand{\bP}{\boldsymbol{P}}
\newcommand{\bQ}{\boldsymbol{Q}}
\newcommand{\bU}{\boldsymbol{U}}
\def\b0{\mathbf{0}}
\def\bP{\boldsymbol{P}}
\def\bQ{\boldsymbol{Q}}
\def\bU{\boldsymbol{U}}
\def\bA{\boldsymbol{A}}
\def\bB{\boldsymbol{B}}
\def\bI{\boldsymbol{I}}
\def\nnz{\mathsf{nnz}}
\newtheorem{theorem}{Theorem}
\newtheorem{definition}{Definition}
\newtheorem{lemma}{Lemma}
\crefname{section}{Sec.}{Secs.}
\Crefname{section}{Section}{Sections}
\Crefname{table}{Table}{Tables}
\crefname{table}{Tab.}{Tabs.}
\begin{document}

\title{Fast, Accurate and Memory-Efficient Partial Permutation Synchronization}


\makeatletter
\newcommand{\printfnsymbol}[1]{%
  \textsuperscript{\@fnsymbol{#1}}%
}

\author{
   Shaohan Li\printfnsymbol{1}\hspace{2cm}
   Yunpeng Shi\printfnsymbol{2}\hspace{2cm}

   Gilad Lerman\printfnsymbol{1}\\
  \printfnsymbol{1}School of Mathematics, University of Minnesota\\
  \printfnsymbol{2}Program in Applied and Computational Mathematics, Princeton University\\
 \tt{\{li000743, lerman\}@umn.edu}, yunpengs@princeton.edu
  }

\maketitle

\begin{abstract}
Previous partial permutation synchronization (PPS) algorithms, which are commonly used for multi-object matching, often involve computation-intensive and memory-demanding matrix operations. These operations become intractable for large scale structure-from-motion datasets. For pure permutation synchronization, the recent Cycle-Edge Message Passing (CEMP) framework suggests a memory-efficient and fast solution. Here we overcome the restriction of CEMP to compact groups and propose an improved algorithm, CEMP-Partial, for estimating the corruption levels of the observed partial permutations. It allows us to subsequently implement a nonconvex weighted projected power method without the need of spectral initialization. The resulting new PPS algorithm, MatchFAME
(Fast, Accurate and Memory-Efficient Matching), only involves sparse matrix operations, and thus enjoys lower time and space complexities in comparison to previous PPS algorithms. We prove that under adversarial corruption, though without additive noise and with certain assumptions, CEMP-Partial is able to exactly classify corrupted and clean partial permutations. We demonstrate the state-of-the-art accuracy, speed and memory efficiency of our method on both synthetic and real datasets.
\end{abstract}

\section{Introduction}
\label{sec:intro}
The problem of partial permutation synchronization (PPS) naturally arises from the task of multi-object matching (MOM). MOM assumes multiple objects (e.g. images), where each single object contains some keypoints associated with underlying distinct labels. The set of all distinct labels is called the universe.
Ideally, any two keypoints, from different objects, that share a common label should be matched.
Given partially observed and corrupted pairwise keypoint matches, MOM asks to recover the ground truth labels of each keypoint, or equivalently, the keypoint-to-universe matches. In structure from motion (SfM), where the objects are images, MOM is often referred to as multi-image matching. Here a keypoint is characterized by a specific location in the image and its associated label is the index of its corresponding 3D point, which can be viewed at this location of the image. In this case, the initial pairwise keypoint matches are typically obtained by SIFT \cite{sift04} and the MOM problem  asks to identify the corresponding 3D point index for each keypoint in each image, up to an arbitrary permutation of the indices.


The mathematical formulation of PPS represents the images
in the latter problem as nodes of an unweighted and undirected graph, which is commonly referred to as the viewing graph, and further represents both
``relative" keypoint-to-keypoint  matches
among pairs of images and the ``absolute" keypoint-to-universe matches
as partial permutation matrices. We recall that a partial permutation matrix is binary with at most one nonzero element at each row and column.
PPS thus asks to recover the ``absolute'' partial permutations (which are associated with nodes of the graph) given possibly corrupted and noisy measurements of the ``relative'' partial permutations (which are associated with edges of the graph).
We remark that when restricting the partial permutation matrices to be full permutations (bi-stochastic, binary and square), PPS reduces to permutation synchronization (PS), which is a special case of the group synchronization problem. Probably, the most well-known group synchronization problem is rotation averaging \cite{HartleyAT11_rotation, ChatterjeeG13_rotation, MPLS} where the group is $SO(3)$. Various approaches from rotation averaging, such as the spectral method \cite{singer2011angular} and SDP relaxation \cite{wang_singer_2013}, can be similarly employed to PS \cite{deepti, Huang13}. However, when considering PS for image matching, all images must share the same set of keypoints. This is restrictive, since images taken from different viewing directions may share very few (or even no) keypoints. Therefore, PPS is more realistic for image matching, and in particular, SfM, than PS.

The PPS problem is challenging for three different reasons.
First of all, the corruption of pairwise measurements in real data can be highly nonuniform, which violates the common assumptions of uniform corruption in \cite{deepti, chen_partial}. Indeed, as is pointed out in \cite{IRGCL}, the corruption in keypoint matches in real data can
concentrate at local regions of the viewing graph.
Second, the number of rows or columns of each partial permutation can be in the order of hundreds or even higher, which is much larger than dimension 3 in rotation averaging. This makes PPS a computationally-intensive task in comparison to rotation averaging. Finally, the relative partial permutations in PPS are no longer square matrices, as in PS, and can have very different sizes and sparsity levels. Consequently, they may introduce additional bias and numerical instability to common PPS algorithms.

This work addresses the above challenges and develops a fast, accurate and memory efficient PPS algorithm that works well for nontrivial corruption models and large scale real data.

\subsection{Related Works}
The first PS algorithm \cite{deepti}, which is commonly referred to as Spectral, computes  the top $m$ eigenvectors of the block matrix of relative permutations, where $m$ is the universe size. It then obtains the absolute partial permutations by projecting the blocks of the eigenmatrix to full permutations using the Hungarian algorithm \cite{Munkres}.  It can be easily adapted to PPS tasks by using other heuristic projection methods \cite{ConsistentFeature, inv_semi_group}. A similar PPS algorithm is MatchEig \cite{MatchEig}, which applies a faster heuristic projection to partial permutations, and an additional hard thresholding step. Although it achieves slight speedup in comparison to Spectral, the hard thresholding can result in overly sparse keypoint matches on some datasets. A theoretically guaranteed SDP relaxation method for PPS, MatchLift \cite{chen_partial}, was proposed for near-optimal handling of the uniform corruption model. However, the SDP relaxation suffers from high computational complexity and is often several orders of magnitude slower than spectral-based methods. MatchALS \cite{MatchALS} replaces the SDP constraint of MatchLift by linear ones, which yield significant speedup. However, it is still much slower than spectral-based methods and is not scalable to even medium-size datasets.  Moreover, the accuracy of \cite{chen_partial, MatchALS} are not competitive on some real datasets as reported in \cite{MatchEig}.  For PS, \cite{Birdal_2019_CVPR} relaxes the space of permutations to the Birkhoff polytope and solves the maximum a-posteriori (MAP) problem on this relaxed manifold. This method relies on a special probabilistic model for permutations, but it has no convergence guarantees and is restricted to PS.

Most importantly, all the aforementioned PS/PPS methods are memory demanding and thus cannot handle large scale SfM datasets such as Photo Tourism \cite{photo_tourism}. For spectral and MatchEig, the top $m$ eigenvectors form a dense $M \times m$ matrix. For Photo Tourism, $m > 10^4$ and $M \approx 10^6$ and thus Spectral and MatchEig require at least 80 GB memory and cannot be implemented on a  personal computer. MatchALS and MatchLift further require eigenvalue computation of an $M \times M$ dense matrix.
These dense matrix operations also increase the time complexity.

A faster and more memory-efficient algorithm (excluding its initialization stage) is the projected power method (PPM). PPM is a nonconvex method based on blockwise power iterations followed by a projection onto the permutation matrices \cite{Chen_PPM}. It can be equivalently viewed as a special case of the projected block coordinate descent algorithm assuming the least squares objective function. It was applied for PS, but as we show it can be easily extended to PPS. We note that in PPS each power iteration only stores $M \times m$ sparse binary matrices for estimating the absolute partial permutations. The number of nonzero elements in these sparse matrices is at most $m$, thus PPM is at least 10,000 times more memory-efficient than spectral and MatchEig on large SfM data. Moreover, since the above power iterations only operate on sparse matrices, its time complexity is also significantly smaller than those of spectral-based methods. However, as far as we know, PPM was only tested in \cite{Chen_PPM, PPM_vahan, IRGCL} on both $\mathbb Z_m$-synchronization and PS, and was never applied to the PPS problem. We remark that there are a couple of limitations that prevent the application of PPM to large-scale real SfM datasets. First of all, as a nonconvex method, it is very sensitive to the initialization of the absolute permutations. A common initializer for PPM is Spectral, which makes PPM memory demanding, regardless how memory-efficient the power iteration is, and also slower. Second, same as Spectral and SDP methods, PPM minimizes the least squares energy, making it nonrobust under nonuniform and adversarial corruption as shown in \cite{IRGCL}.

The recent theoretically-guaranteed cycle-edge message passing (CEMP) algorithm \cite{cemp} opens the door for fast, memory efficient, and outlier-robust implementation for compact group synchronization without spectral initialization. Different from the previous cycle-consistency-based methods \cite{shen2016, Zach2010, non-seq, viewing_graph}, it uses a fast iterative message passing scheme to globally estimate the corruption levels of the given pairwise measurements. It is numerically demonstrated in \cite{cemp} that CEMP is memory-efficient and fast for $SO(d)$-synchronization, especially for large $d$. For permutation synchronization, \cite{IRGCL} proposed an efficient implementation of CEMP. In particular, it iteratively reweighted the edges of the viewing graph using CEMP-estimated corruption levels, and simultaneously applied a weighted spectral/PPM method.
However, these frameworks were only fully developed for compact group synchronization. The extension of CEMP to partial permutations is nontrivial, since partial permutations do not form a group and the theory of CEMP to date no longer holds in this new regime.

FCC \cite{FCC}, which was published after the submission of this work, assigns for each keypoint match a confidence score for being a correct match. It is fast, accurate and memory efficient, and seems to comparably perform to the proposed method. Its different graph model might be more tolerant to additive noise. However, our method has several advantages. First, it enjoys some theoretical guarantees. Second, its refined matches are automatically cycle-consistent. Last, its space complexity is lower than that of FCC with default parameters.

\subsection{Contributions of This Work}
The main contributions of this work are as follows:
\begin{itemize}[noitemsep,nolistsep]
    \item We overcome the restriction of CEMP to compact groups, and extend it to PPS. The new CEMP-Partial algorithm can be applied to general SfM matching data, and is theoretically guaranteed under adversarial corruption.
    \item We propose MatchFAME (Fast, Accurate and Memory-Efficient Matching) for PPS. It combines CEMP-Partial with weighted PPM.  It only involves sparse matrix operations and thus enjoys significantly lower time and space complexities than previous PPS methods.
    \item We demonstrate the accuracy and efficiency of MatchFAME on synthetic and real datasets in comparison to the current state-of-the-art PPS methods.
\end{itemize}

\section{Partial Permutation Synchronization}
\label{sec:pps}

Recall that a partial permutation matrix is a binary matrix that has at most one nonzero element at each row and column. We note that it is different from a full permutation whose rows and columns have exactly one nonzero element. A partial permutation can be rectangular, whereas the full one has to be a square.
We denote by $\mathcal{P}^{l_1,l_2}$ the space of $l_1 \times l_2$ partial permutations, and by $\mathcal{P}^{m}_F$ the space of $m \times m$ full permutations. We also denote $[n]=\{1,2,..., n\}$ for $n \in \mathbb{N}$.
Using the above notation, we formally state the PPS problem as follows. The problem assumes a graph $G=([n],E)$ with $n$ nodes and underlying unknown ground-truth partial permutations $\{\bP_i^{*}\}_{i\in [n]}$ of size $m_i \times m$ associated with the nodes, where $m$ is fixed and $m\geq m_i$ for all $i$. 
It further assumes that for each edge $ij \in E$, a pairwise partial permutation $\bX_{ij}$ is observed, which is viewed as a measurement of the ground-truth partial permutation, $\bX_{ij}^*:=\bP_i^* \bP_j^{*T}$.  The PPS problem asks to recover $\{\bP_i^{*}\}_{i\in [n]}$ from $\{\bX_{ij}\}_{ij\in E}$.
In practice, PPS algorithms often operate on the block matrix $\bX = (\bX_{ij})_{1 \le i,j \le n}$, where $\bX_{ij} = 0$  for $ij \not \in E$.

The adversarial corruption model 
partitions the edge set $E$ into a set of clean (good) edges, $E_g$, and a set of corrupted (bad) edges, $E_b$, where for $ij \in E_g$, $\bX_{ij}=\bX_{ij}^*$, and for $ij \in E_b$, $\bX_{ij}\neq \bX_{ij}^*$.
It is adversarial since it does not make assumptions on the distribution of the corrupted partial permutations and the graph topology; though we may add some assumptions.

In multi-image matching, $G([n],E)$ is referred to as the viewing graph, $m$ is the number of 3D points and $n$ is the number of images. Each graph node $i$ is associated with an image with $m_i$ keypoints, and $\bP_i^*$ encodes its ground truth keypoint-universe matches. Specifically, $\bP_i^*(k,l)=1$ if and only if the $k$-th keypoint in image $i$ corresponds to the $l$-th point in the 3D point cloud. For each edge $ij \in E$, the partial permutation $\bX_{ij}$ represents the observed keypoint matches between images $i$ and $j$ (obtained by e.g., SIFT). We note that $\bX_{ij}(k,l)=1$ if and only if we observe a match between the $k$-th keypoint in image $i$ and the $l$-th keypoint in the $j$-th image. We denote $M: = \sum_{i=1}^n m_i$ and note that the block matrix $\bX$ is of size $M \times M$.
At last we comment that in multi-image matching one mainly cares about improving the keypoint matches. Therefore instead of the estimates $\{\hat\bP_{i}\}_{i \in [n]}$ of absolute permutations, it is common to output the estimates of relative permutations, $\bZ_{ij} = \hat\bP_{i}\hat\bP_{j}^\top$ for any $ij\in E$.





\section{Proposed Method}
\label{sec:proposed}
\subsection{Brief Review of CEMP}
\label{sec:cemp_review}
We 
focus on the case of PS with the distance
\begin{equation}
\label{eq:distance_cemp}
d(\bX_1, \bX_2) = 1-\langle\bX_1\,, \bX_2\rangle/m,
\ \text{ for } \bX_1, \ \bX_2 \in \mathcal{P}_F^m.
\end{equation}
For $ij \in E$, we define the ground-truth corruption level by
$$s_{ij}^* = d(\bX_{ij}, \bX_{ij}^*).$$
CEMP uses cycle-consistency information to estimate these corruption levels.
Recall that a 3-cycle, $ijk$, is a path in $G([n],E)$ containing the nodes $i$, $j$, $k \in [n]$. In our previous and current work we focus on 3-cycles for simplicity and efficient computation and refer to them just as cycles. One can extend our methods to higher-order cycles.
In PS, a cycle $ijk \subseteq E$ is consistent if and only if $\bX_{ij}\bX_{jk} = \bX_{ik}$ (equivalently, $\bX_{ij}\bX_{jk}\bX_{ki}=\bI$, where $\bI$ denotes the identity matrix).
We define the cycle inconsistency of the cycle $ijk$ as
$$d^{\text{CEMP}}_{ijk}:=d(\bX_{ij}, \bX_{ik}\bX_{kj}) = 1-\langle\bX_{ij}\,, \bX_{ik}\bX_{kj}\rangle/m.$$
CEMP iteratively approximates the corruption levels as follows
\begin{align}
\label{eq:sij_cemp_est}
    s_{ij}^*
    \approx \sum_{k} w_{ijk} d^{\text{CEMP}}_{ijk},
\end{align}
where the weights are updated at each iteration  using  improved estimates of the corruption levels (we omit their formulas).

The bi-invariance property of the distance in \eqref{eq:distance_cemp} implies \cite{cemp}
\begin{equation}
\label{eq:good_cycle_cemp}
    d_{ijk}^{\text{CEMP}} = s_{ij}^* \text{ whenever } ik, \ jk\in E_g.
\end{equation}
CEMP chooses specific weights in \eqref{eq:sij_cemp_est} that aim to highlight the case in \eqref{eq:good_cycle_cemp}. That is, for $ij \in E$, they are ideally close to 1 when $ik \in E_g$ and $jk \in E_g$, and close to 0 otherwise.
Under some conditions, CEMP exactly recovers the corruption levels \cite{cemp}.




\subsection{A Cycle Inconsistency Measure for PPS}
\label{sec:cycle_consistency_def}
For PPS, we say that a cycle $ijk$ is consistent whenever $\bX_{ij}\bX_{jk}\leq \bX_{ik}$, $\bX_{jk}\bX_{ki}\leq \bX_{ji}$ and $\bX_{ki}\bX_{ij}\leq \bX_{kj}$ (the notation $\bA \leq \bB$ corresponds to the component-wise inequality, i.e., $A_{ij} \leq B_{ij}$ for all indices $i$, $j$).
We further motivate this definition in the supplemental material.

In order to define an inconsistency measure for PPS, we make several observations. We first formally generalize \eqref{eq:distance_cemp} and define the following conditional dissimilarity function, or divergence,
where $\bX_1$, $\bX_2 \in \mathcal{P}^{l_1,l_2}$:
$$
    d(\bX_1 | \bX_2):= 1-
    {\langle \bX_1, \bX_2\rangle} / {\nnz(\bX_2)},
$$
where $\nnz(\bX_2)$ is the number of nonzero elements in the matrix $\bX_2$.
We note that 
    $d(\bX_1 | \bX_2)= 0$
    if and only if
$\bX_2 \leq \bX_1$
and therefore $\bX_{ik}\bX_{kj}\leq \bX_{ij}$ if and only if
    $$\tr(\bX_{ji}\bX_{ik} \bX_{kj})
    \equiv \langle \bX_{ij}, \bX_{ik} \bX_{kj}
    \rangle
= \nnz(\bX_{ik} \bX_{kj}).$$
Thus, for a cycle $ijk$,
we define $n_i = \nnz(\bX_{ki}\bX_{ij})$, $n_j = \nnz(\bX_{kj}\bX_{ji})$, $n_k = \nnz(\bX_{ik}\bX_{kj})$,  $n_{\Delta} = \tr(\bX_{ij}\bX_{jk}\bX_{ki})$, and conclude  
that $ijk$ is cycle-consistent if and only if
$3n_{\Delta} = n_i+n_j+n_k$.
In view of this observation, we suggest the following cycle inconsistency measure for PPS:
\begin{align}\label{eq:dijk}
    d_{ijk} = 1- 3n_{\Delta}/(n_i+n_j+n_k).
\end{align}
We further interpret this measure in the supplemental material.





\subsection{The CEMP-Partial Algorithm}
\label{sec:cemp_partial}
The CEMP algorithm for PS is motivated by \eqref{eq:sij_cemp_est} and \eqref{eq:good_cycle_cemp}.
By replacing $d_{ijk}^{\text{CEMP}}$ in the the CEMP algorithm  by our new $d_{ijk}$ in \eqref{eq:dijk}, one can obtain the CEMP-Partial algorithm for PPS, which is sketched in Algorithm \ref{alg:cemp-partial}. This algorithm iteratively updates the corruption levels according to \eqref{eq:alg-sij} below, where the weights $w_{ijk}^{(t)}$ will be clarified below and $N_{ij}:=\{k\in [n]: ik, jk \in E\}$, $ij\in E$, is the set of nodes that form cycles with the edge $ij$. Note that \eqref{eq:alg-sij} is analogous to \eqref{eq:sij_cemp_est} in the case of PS.
While in PS the aim of such a formula is to eventually yield a good estimate for the corruption levels, in PPS we only aim to cluster uncorrupted and corrupted edges with low (close to zero) and sufficiently high corruption levels, respectively; we show that this is indeed possible in \S\ref{sec:theory}.
Before the iterations, the corruption levels are initialized in \eqref{eq:alg-init} as a similar average but with uniform weights. Given the corruption levels of the previous iteration, the weights of the current iteration are computed by \eqref{eq:alg-wijk}, where $\beta_t$ is a fixed parameter at iteration $t$.
This formula aims to assure that $w_{ijk}^{(t)}$ is large (close to 1) whenever both $ik$ and $jk$ are good and close to zero otherwise, so that the estimated $s_{ij}^{(t+1)}$ in \eqref{eq:alg-wijk} is approximately an average of only those $d_{ijk}$'s with good edges $ik$ and $jk$.
Such a behavior of $w_{ijk}^{(t)}$ occurs when the estimates of the corruption levels  $s_{ij}$ is close to zero when $ij$ is a good edge and sufficiently far from zero otherwise. In fact, alternatively updating both the weights and the corruption levels aims to result in such a property (see \S\ref{sec:theory}).


\begin{algorithm}[h]
\caption{CEMP-Partial}
\label{alg:cemp-partial}
\begin{algorithmic}
\REQUIRE $\{d_{ijk}\}_{ij\in E, k\in N_{ij}}$ (see  \eqref{eq:dijk}), number of iterations $T$, increasing  $\{\beta_t\}_{t=1}^T$\\
\STATE \textbf{Steps:}
\vspace*{-\baselineskip}
\STATE
\begin{equation}\label{eq:alg-init}
s_{ij}^{(0)} = \frac{1}{|N_{ij}|} \sum_{k\in N_{ij}} d_{ijk} \  \text{ for }   \ ij\in E
\end{equation}
\vspace*{-\baselineskip}
\FOR {$t=0:T-1$}
\vspace*{-\baselineskip}
\STATE \begin{align}
  &w_{ijk}^{(t)} =  \exp\mleft(-\beta_{t}\mleft(s_{ik}^{(t)}+s_{jk}^{(t)}\mright)\mright),\  k\in N_{ij},\ ij\in E \label{eq:alg-wijk}\\
    &s_{ij}^{(t+1)}
    =\frac{1}{Z_{ij}^{(t)}}\sum\limits_{k\in N_{ij}}w_{ijk}^{(t)} d_{ijk} \text{ for } \ ij\in E \label{eq:alg-sij}
\end{align}
\STATE
where $Z_{ij}^{(t)}=\sum\limits_{k\in N_{ij}}w_{ijk}^{(t)}$ is a normalization factor
\ENDFOR
\ENSURE $\hat{s}_{ij}:=s_{ij}^{(T)}$ \  for \ $ij\in E$
\end{algorithmic}
\end{algorithm}


\subsection{MatchFAME}
\label{sec:matchfame}
We propose MatchFAME that aims to address the main challenges of PPS (see \S\ref{sec:intro}). MatchFAME combines  CEMP-Partial with a weighted PPM method.
The original (unweighted) PPM is an iterative procedure that aims to
minimize the least squares energy
   $\sum_{ij\in E}\|\bP_i\bP_j^\top - \bX_{ij}\|_F^2$,
under the constraint $\bP_i \in \mathcal{P}_F^m$ for $i\in [n]$. Given the estimated absolute permutations for the different nodes at the $t$-th iteration, $\{\bP_i^{(t)}\}_{i \in [n]}$, PPM estimates each permutation on node $i$ in the next iteration as
\begin{align}\label{eq:PPM}
    \bP_i^{(t+1)} = \textbf{Proj} 
    \mleft(\sum_{j\in N_i}\bX_{ij}\bP_j^{(t)}/|N_i|\mright),
\end{align}
where $N_i$ denotes the neighboring nodes of $i$, and \textbf{Proj} is the projection onto the space of permutations, which can be computed by the Hungarian algorithm \cite{Munkres}. Intuitively, at iteration $t$ of PPM, each $j\in N_i$ proposes the ``local" estimate of $\bP_i^*$: $\bX_{ij}\bP_j^{(t)}$, and $\bP_i$ in the new iteration is updated by the average of these local estimates followed by a projection. However, these local estimates are only accurate when $\bX_{ij}\approx \bX_{ij}^*$ and $\bP_j^{(t)}\approx \bP_j^*$ for all $j\in N_i$. This makes PPM sensitive to both initialization and edge corruption, and thus a naive generalization of PPM to partial permutations is not sufficient to handle the PPS challenges described in \S\ref{sec:intro}.

To address the first PPS challenge of nonuniform corruption (see \S\ref{sec:intro}), we assign a weight $w_{ij}$ to each $ij\in E$, where $w_{ij}$ depends on $\hat{s}_{ij}$, the estimated corruption level by CEMP-Partial, and a parameter $\gamma$, which measures the confidence of the estimated corruption levels, as follows:
$w_{ij}=\exp(-\gamma \hat{s_{ij}})$. We can thus implement a weighted PPM iteration for each $i\in [n]$:
\begin{align}\label{eq:WPPM}
    \bP_i^{(t+1)} = \textbf{Proj} \left(\sum_{j\in N_i}\tilde w_{ij}\bX_{ij}\bP_j^{(t)}\right),
\end{align}
where $\bP_i^{(t)}, \bP_i^{(t+1)}\in \mathcal{P}^{m_i, m}$, $\tilde w_{ij}=w_{ij}/\sum_{j\in N_i} w_{ij}$ are the normalized weights and \textbf{Proj} is the heuristic and fast projection onto partial permutations described in \cite{MatchEig}. In such a way, the projected power iterations will focus on the clean edges and thus largely mitigate the sensitivity of standard PPM towards nonuniform topology of the corrupted subgraph, and nonuniform distribution of the corrupted partial permutations \cite{IRGCL}. We remark that given the ideal weights $w_{ij} = \mathbf{1}_{\{ij \in E_g\}}$, where $\mathbf{1}$ is the indicator function, the ground truth permutations, $\{\bP_i^*\}_{i\in [n]}$, form a fixed point of our weighted PPM.

The second PPS challenge of  highly-demanding computation (see \S\ref{sec:intro}) arises in PPM if it is initialized by Spectral or another standard PPS algorithm. In order to resolve this issue, we follow \cite{MPLS} and initialize our solution using a minimum spanning tree (MST), which depends on the output of CEMP-Partial. Specifically, we build a weighted graph where edge weights are the estimated corruption levels by CEMP-Partial. An MST is then extracted from the weighted graph. Note that it has the lowest average corruption levels among all other spanning trees.
We then use it to initialize the absolute permutations. We first arbitrarily assign to the root node the partial permutation  $\bI_{m_i\times m}$, which is an $m_i \times m$ matrix whose diagonal elements are 1 and the rest are 0.
We subsequently multiply relative permutations along the MST, namely applying
$\bP_j^{(t)} = \textbf{Proj}(\bX_{ji}\bP_i^{(t)})$
from the root to the leaves. We note that computation of the MST only uses the small $n \times n$ adjacency matrix of the graph, unlike  spectral initialization that involves the eigenvalue decomposition of a huge $M \times M$ matrix. As a result, our initialization is much faster and memory efficient, which breaks the computational bottleneck of standard PPM.

The last challenge of PPS is the uneven dimension and sparsity level of partial permutations. The main hurdle was the generalization of CEMP to PPS (see  \S\ref{sec:cemp_partial}). This way, the weights for our weighted PPM can be reliably estimated. Another issue that arises due to this challenge is that the MST initialization may be too sparse at the end of the spanning tree if some partial relative permutations in $\{\bX_{ij}\}_{ij \in E}$ are extremely sparse.  In some extreme cases, some columns of the estimated absolute permutations are zero. Therefore, after initializing the absolute permutations, we check if there is a zero column for the block column matrix of the estimated absolute permutations. If it is the case, we randomly fill one of the elements with 1.
In some special cases, where denser matrices are needed, one may fill instead zero rows (see supplemental material).

The full description of MatchFAME is in Algorithm \ref{alg:matchfame}. As is common in SfM, MatchFAME outputs refined and consistent keypoint matches:
$\bZ_{ij} := \hat\bP_{i}{\hat\bP_{j}}^\top$ for any $ij\in E$.
\begin{algorithm}[h]
\caption{MatchFAME}
\begin{algorithmic}\label{alg:matchfame}
\REQUIRE CEMP-Partial output, $\{\hat{s}_{ij}\}_{ij \in E}$, pairwise matching matrix $\bX$, parameter $\gamma$, number of iterations $t_0$
\STATE \textbf{Steps:}
\STATE Form weights on $E$: $w_{ij}=\hat{s}_{ij}$ for $ij \in E$
\STATE Find the MST of the weighted graph $G([n],E,\{w_{ij}\}_{ij \in E})$
\STATE For the root of tree at node $i_0 \in [n]$: $\bP_{i_0}^{(0)} = \bU_{m_i\times m}$
\STATE Propagate along MST: $\bP_j^{(0)} = \textbf{Proj}(\bX_{ji}\bP_i^{(0)})$; if a column of
$[\bP_j^{(0)}]_{j\in [n]}$ is $\boldsymbol{0}$, randomly assign 1 to  one of its elements
\STATE Reset 
graph weights: $w_{ij} = e^{-\gamma s_{ij}}$ for $ij\in E$
\STATE Set $t=0$
\WHILE{$t < t_0$ and $\bP_i^{(t)} \neq \bP_i^{(t-1)}$}
\STATE Compute $\bP_i^{(t+1)}$ by \eqref{eq:WPPM} for $i\in [n]$
\STATE $t=t+1$
\ENDWHILE
\ENSURE $\bZ_{ij} = \bP_i^{(t_0+1)} {\bP_j^{(t_0+1)}}^{\top} $ for $ij \in E$
\end{algorithmic}
\end{algorithm}

The default parameters for CEMP-Partial are $T=25$ and $\beta_t= \min \{1.2^{t},40\}$.
The default parameters for MatchFAME are $t_0 = 60$ and $\gamma = 4$ (for the noiseless synthetic data we use $\gamma =20$ to approach exact recovery).

%




\subsection{Time and Space Complexity}
\label{sec:complexity}

Recall that $n$ is the number of images and $M$ is the total number of 2D keypoints (which is different from $m$), so that the average number of 2D keypoints in each image is $M/n$. Let a$n_E$ denote the number of edges in the viewing graph.
The time and space complexities of CEMP-Partial are  $O(M n_E)$ and $O(n n_E)$ respectively.
Let $\bP$ be the $M \times m$ block matrix whose $i$-th block is $\bP_i$,  $\bX$ be the $M \times M$ block matrix whose $(i,j)$-th block is $\bX_{ij}$. The power iterations in \eqref{eq:WPPM}, considering all $i\in [n]$, can be equivalently viewed as a multiplication between a weighted sparse matrix $\bX$ and a sparse matrix $\bP$.
Using the facts that there are at most $n$ nonzero elements in each column of $\bP$ and each row of $\bX$ and each row of $\bP$ has at most 1 nonzero element, the time and space complexities for the power iterations are $O(nM)$ and $O(nm)$, respectively. Note that PPM requires an additional projection onto the set of partial permutations, whose time complexity is $O(Mm)$ and space complexity $O(nm)$.
The time complexity for finding MST is $O(n_E \log n)$ and its space complexity is $O(n)$. The time complexity of multiplying matrices along the spanning tree is $O(n^2)$ and it requires no additional memory. To sum up, since $n<n_E$, MatchFAME requires  $O(M\cdot\max(m, n_E))$ time and $O(n\cdot\max(m,n_E))$ memory. In comparison, Spectral computes the top $m$ eigenvectors of $\bX$, which is commonly solved by the power method.  This method requires the iterative multiplication of $\bX$ with the updated $M \times m$ dense eigenmatrix, whose time and space complexities are $O(Mmn)$ and $O(Mm)$, respectively, which is much larger than that of MatchFAME.

\section{Theoretical Guarantees for CEMP-Partial}
\label{sec:theory}
We assume the adversarial corruption model for PPS (see  \S\ref{sec:pps}).
We show that the estimated corruption levels by CEMP-Partial at good edges
converge to 0 linearly and uniformly. Moreover,
we show that the estimated corruption levels at the bad edges, can be separated from the ones at the good edges.

\noindent
{\bf Definitions:}
For $ij \in E$,
define
$G_{ij}:=\{k\in [n]: ik, jk \in E_g\}$ and associate $k \in G_{ij}$ with the cycle $ijk$. We refer to the elements of $G_{ij}$ as good cycles (with respect to $ij$).
Let
$$\lambda := 1- \min_{ij} {|G_{ij}|}/{|N_{ij}|}.$$

Some of the following definitions, which lead to the notion of $s_{ij}^*$, are demonstrated in Figure \ref{fig:definition}. For any image $i$, let $I_i = \{p_{i,j}\}_{j=1}^{m_i}$ denote the set of its 2D keypoints.
Let $h : \cup_{i=1}^n I_i \rightarrow [m]$ map 2D keypoints to universal keypoints such that for any $i \in [n]$ and $p_{i,i'} \in I_i$, $h(p_{i,i'})$ is the index of the
3D keypoint, i.e., $P_i^*(i',h(p_{i,i'}))=1$.
For $ij \in E$, let $U_{ij} = h(I_i) \cup h(I_j)$.
We partition $U_{ij}$ into $U_{ij}^{\text{good}}$ and $U_{ij}^{\text{bad}}$. The set $U_{ij}^{\text{good}}$ contains the keypoints in $U_{ij}$ that match the ground-truth keypoints or match no keypoint when no ground-truth matching exists.
That is, $k \in h(I_i) \cap h(I_j)$ is in $U_{ij}^{\text{good}}$ whenever there exists $p_{i,a} \in I_i$ and $p_{j,b} \in I_j$ such that $\bP_i^*(a,k) = \bP_j^*(b,k) = \bX_{ij}(a,b) = 1$. Furthermore, $k \in h(I_i) \setminus h(I_j)$ (or $k \in h(I_j) \setminus h(I_i)$) is in $U_{ij}^{\text{good}}$ whenever there are no $p_{i,a} \in I_i$ and $p_{j,b} \in I_j$ such that $\bP_i^*(a,k) = \bX_{ij}(a,b) = 1$
(or $\bP_j^*(b,k) = \bX_{ij}(a,b) = 1$).
Similarly, $U_{ij}^{\text{bad}} := U_{ij}\setminus U_{ij}^{\text{good}}$ is the set of keypoints in $U_{ij}$ that match wrong keypoints in the other image, or match no keypoint if a ground-truth match exists.
For any $ij \in E$, define $m_{ij} = |U_{ij}|$, $m_{ij}^{\text{bad}} = |U_{ij}^{\text{bad}}|$ and note that  $s_{ij}^* = {m_{ij}^{\text{bad}}}/{m_{ij}}$.

\begin{figure}
    \centering
    \includegraphics[width=8cm]{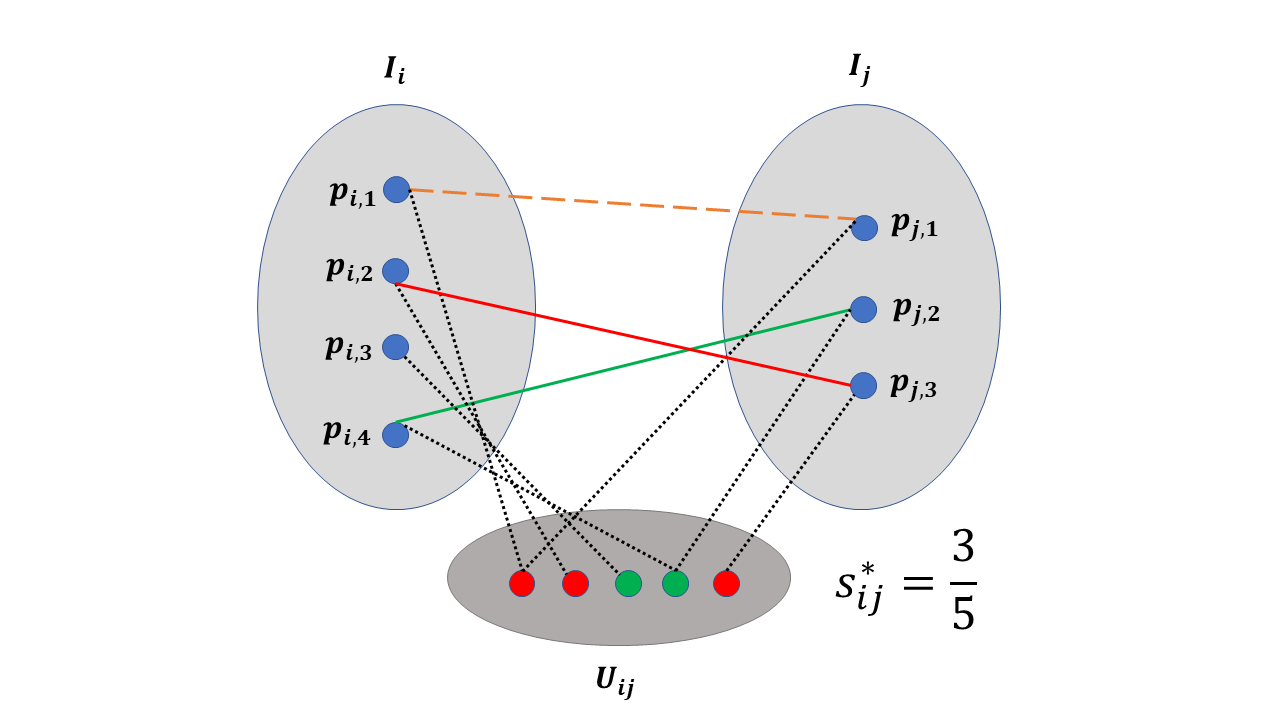}
    \caption{An illustration of $s_{ij}^*$ and related definitions. The sets $I_i$ and $I_j$ are of 2D keypoints in images $i$ and $j$, respectively, and the set $U_{ij}$ is of the corresponding universal keypoints in images $i$ and $j$. Green lines denote the good keypoint matches, red lines denote the bad keypoint matches, dashed orange lines denote the missing keypoint matches, and black dotted lines denote the correspondence between 2D and 3D keypoints (i.e., the $h$ function). Green dots in $U_{ij}$ represent elements of $U_{ij}^{\text{good}}$ and red dots in $U_{ij}$ represent elements of $U_{ij}^{\text{bad}}$. We note that $m_{ij} = 5$, $m_{ij}^{\text{good}} = 2$, $m_{ij}^{\text{bad}} = 3$ and $s_{ij}^* = \frac{3}{5}$.}
    \label{fig:definition}
\end{figure}

In order to guarantee that the separation problem is well-posed, we need to ensure two different types of conditions. The first is that there are sufficiently many good cycles. We ensure this condition by bounding $\lambda$ (similarly to \cite{cemp}). The second is that sufficiently many good matches exist. Indeed, since partial permutations can be very sparse and even zero matrices such a condition is necessary. For this purpose, we formulate the following cycle-verfiability condition (we further interpret it and clarify its name in the supplemental material). It uses a parameter $p_v$ that expresses the proportion of verifiability.
\begin{definition}
\label{def:cycle_verifiable}
Given $p_v\in (0, 1]$, a graph $(V,E)$ is $p_v$-cycle verifiable if for any $ij \in E$
there are at least $p_v |G_{ij}|$ good cycles w.r.t.~$ij$
such that for each such cycle, $ijk$, the following property holds: if $a  \in I_i \cup I_j$, then
there exists $b \in I_k$
that matches $a$ (i.e., if $a = p_{i,r}$
and $b = p_{k,t}$, then $\bX_{ik}(r,t) = 1$).
\end{definition}

\noindent
{\bf Formulation of the Main Theorem:}
\begin{theorem}\label{thm:main}
If $G$ 
is $p_v$-cycle verifiable
and $s_{ij}^{(t)}$ is computed by CEMP-Partial
with $\beta_0 \le \frac{1}{2\lambda}$ and $\beta_{t+1} = r\beta_t$, where
$ \lambda < 1+\frac{3em}{p_v}-\sqrt{\frac{3em}{p_v}(2+\frac{3em}{p_v})}$ and $1 < r < \frac{(1-\lambda)^2p_v}{6em\lambda}$, then  $\forall t>0$ 
$$\forall ij \in E_g \ \ s_{ij}^{(t)} \le \frac{1}{2 \beta_0 r^t} \ \text{ and } \ \forall ij \in E_b \ \ s_{ij}^{(t)} \ge \frac{p_v}{3e}(1-\lambda)s_{ij}^*.$$
\end{theorem}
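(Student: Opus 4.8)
The plan is to prove both estimates simultaneously by a coupled induction on $t$, carrying two invariants:
\textbf{(G)} $s_{ij}^{(t)} \le \tfrac{1}{2\beta_0 r^t}$ for every $ij\in E_g$, and \textbf{(B)} $s_{ij}^{(t)} \ge \tfrac{p_v}{3e}(1-\lambda)s_{ij}^*$ for every $ij\in E_b$; the theorem is then exactly the conjunction of (G) and (B). The whole mechanism hinges on the identity $\beta_t=\beta_0 r^t$, so that the decay rate claimed in (G) tracks the annealing schedule precisely. With it, at a \emph{good cycle} $k\in G_{ij}$ the invariant (G) gives $s_{ik}^{(t)}+s_{jk}^{(t)}\le 1/\beta_t$, hence $w_{ijk}^{(t)}=\exp(-\beta_t(s_{ik}^{(t)}+s_{jk}^{(t)}))\ge e^{-1}$, while at a \emph{bad cycle} the corrupted edge carries, by (B), a corruption estimate $\ge c\,s^*$ with $c=\tfrac{p_v}{3e}(1-\lambda)$, which will be used to suppress its weight.

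The heart of the argument, and the genuinely PPS-specific part, is two structural bounds on $d_{ijk}$ obtained from the counts $n_i,n_j,n_k,n_\Delta$. First, since $\bX^*_{ik}\bX^*_{kj}=\bP_i^* (\bP_k^{*\top}\bP_k^*)\bP_j^{*\top}\le \bP_i^*\bP_j^{*\top}=\bX_{ij}^*$ component-wise (the middle factor being a diagonal $0/1$ matrix), a fully clean cycle is consistent, so $d_{ijk}=0$ whenever $ij,ik,jk\in E_g$. Second, I would establish (i) a \emph{lower} bound for bad edges: if $ij\in E_b$ and $k\in G_{ij}$ is a verifiable good cycle, then $d_{ijk}\ge \tfrac13 s_{ij}^*$, where cycle-verifiability (Definition~\ref{def:cycle_verifiable}) guarantees that $\bX_{ik},\bX_{jk}$ cover $U_{ij}$ so the denominator $n_i+n_j+n_k$ does not degenerate and the $m_{ij}^{\text{bad}}$ mismatches in $\bX_{ij}$ each drop $n_\Delta$ by one relative to the sum, the factor $\tfrac13$ coming from the coefficient of $n_\Delta$; and (ii) an \emph{upper} bound for good edges: if $ij\in E_g$ and $k\notin G_{ij}$, then $d_{ijk}\le m\,s_\ell^*$ for a corrupted edge $\ell\in\{ik,jk\}$, the factor $m$ arising as the worst-case inverse normalization when $\nnz$ is as small as one but up to $m$ matches may differ. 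Pinning down these two constants (and controlling the $\nnz$-denominators via verifiability) is the main obstacle.

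Granting the structural bounds, the induction closes cleanly. For the base case $t=0$ the weights are uniform: a good edge gets $s_{ij}^{(0)}\le |N_{ij}\setminus G_{ij}|/|N_{ij}|\le \lambda\le \tfrac{1}{2\beta_0}$ using $\beta_0\le \tfrac{1}{2\lambda}$, giving (G); restricting to verifiable good cycles gives (B) directly. For the step, assume (G) and (B) at $t$. For \textbf{(B) at $t+1$}, lower-bound the numerator of $s_{ij}^{(t+1)}$ by the verifiable good cycles only (at least $p_v|G_{ij}|$ of them, each with $w\ge e^{-1}$ and $d_{ijk}\ge \tfrac13 s_{ij}^*$) and upper-bound the denominator by $|N_{ij}|$, yielding $s_{ij}^{(t+1)}\ge \tfrac{p_v|G_{ij}|}{3e|N_{ij}|}s_{ij}^*\ge \tfrac{p_v}{3e}(1-\lambda)s_{ij}^*$. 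For \textbf{(G) at $t+1$}, the good-cycle terms vanish ($d_{ijk}=0$), so only bad cycles appear in the numerator; for each, $w_{ijk}^{(t)}d_{ijk}\le m\,s_\ell^*\exp(-\beta_t c\,s_\ell^*)\le \tfrac{m}{\beta_t c e}$ by maximizing $x\mapsto x e^{-\beta_t c x}$ at $x=1/(\beta_t c)$. With the denominator $\ge |G_{ij}|e^{-1}$ and $|N_{ij}\setminus G_{ij}|/|G_{ij}|\le \lambda/(1-\lambda)$, this gives $s_{ij}^{(t+1)}\le \tfrac{3em\lambda}{\beta_t\,p_v(1-\lambda)^2}$, which is $\le \tfrac{1}{2\beta_{t+1}}=\tfrac{1}{2r\beta_t}$ exactly when $r\le \tfrac{p_v(1-\lambda)^2}{6em\lambda}$—the stated upper threshold on $r$. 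Nonemptiness of the interval $1<r<\tfrac{p_v(1-\lambda)^2}{6em\lambda}$ is precisely $(1-\lambda)^2>\tfrac{6em}{p_v}\lambda$, whose smaller root reproduces the stated bound $\lambda<1+\tfrac{3em}{p_v}-\sqrt{\tfrac{3em}{p_v}(2+\tfrac{3em}{p_v})}$, so the hypotheses are exactly what is needed for the induction to propagate.

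I expect the decisive difficulty to lie entirely in step (ii): translating corruption of a single edge $\ell$ into the sharp inequality $d_{ijk}\le m\,s_\ell^*$, since the ratio $3n_\Delta/(n_i+n_j+n_k)$ mixes three products of possibly very sparse partial permutations and the normalization by $\nnz$ is fragile. Verifying that the factor is exactly $m$ (and not a larger constant) is what makes the derived $r$-threshold coincide with the one in the statement, so this estimate—rather than the otherwise routine weight bookkeeping—is where the real work sits.
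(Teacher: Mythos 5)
Your proposal is correct, and its induction bookkeeping coincides with the paper's proof almost line by line: the base case via $\beta_0\le\tfrac{1}{2\lambda}$, the weight floor $e^{-1}$ on good cycles from $\beta_t=\beta_0 r^t$ together with invariant (G), the suppression of bad cycles via $x e^{-\alpha x}\le\tfrac{1}{\alpha e}$ with $\alpha=\beta_t\tfrac{(1-\lambda)p_v}{3e}$, the ratio $|N_{ij}\setminus G_{ij}|/|G_{ij}|\le\lambda/(1-\lambda)$, and the observation that nonemptiness of the interval for $r$ is exactly the stated quadratic condition on $\lambda$ — all with the paper's constants. Where you genuinely diverge is in the two structural lemmas. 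For the lower bound, the paper proves only an \emph{averaged} statement (Lemma~\ref{lemma2}), $\tfrac{1}{|G_{ij}|}\sum_{k\in G_{ij}}d_{ijk}\ge\tfrac{p_v}{3}s_{ij}^*$, via the two-sided count $m_{ij}^{\mathrm{bad}}\cdot\mathbf{1}_{\{U_{ij}^{\mathrm{bad}}\subseteq h(I_k)\}}\le N^{\mathrm{bad}}_{ijk}\le 3m_{ij}^{\mathrm{bad}}$ and Jensen's inequality for the concave map $x\mapsto x/(x+3m_{ij}^{\mathrm{good}})$; you instead assert the \emph{per-cycle} bound $d_{ijk}\ge\tfrac13 s_{ij}^*$ on each verifiable good cycle. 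Your version is in fact true and gives a cleaner route: verifiability plus $ik,jk\in E_g$ forces every $x\in U_{ij}^{\mathrm{bad}}$ into $h(I_k)$, and the paper's injections (its $g_{ij,k}$ and $f_{ijk}$) yield $N^{\mathrm{bad}}_{ijk}\ge m_{ij}^{\mathrm{bad}}$ and $n_\Delta\le m_{ij}^{\mathrm{good}}$, whence $d_{ijk}\ge m_{ij}^{\mathrm{bad}}/(m_{ij}^{\mathrm{bad}}+3m_{ij}^{\mathrm{good}})\ge\tfrac13 s_{ij}^*$, no Jensen needed, and averaging over the $\ge p_v|G_{ij}|$ verifiable cycles recovers the paper's constant exactly. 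Be aware, though, that your stated mechanism ("each mismatch drops $n_\Delta$ by one") is not what actually proves this — the content is the two injectivity arguments above, which your sketch leaves unproved. Conversely, the "decisive difficulty" you locate in step (ii) evaporates: for $ij\in E_g$ and $k\notin G_{ij}$, pick a bad $\ell\in\{ik,jk\}$; then $m s_\ell^*=m\,m_\ell^{\mathrm{bad}}/m_\ell\ge m_\ell^{\mathrm{bad}}\ge 1\ge d_{ijk}$, so $d_{ijk}\le m s_\ell^*$ holds trivially (the paper's combinatorial Lemma~\ref{lemma1} establishes a sharper intermediate inequality but relaxes to this same constant), and the only nontrivial content of the good-edge upper bound is $d_{ijk}=0$ for fully clean cycles — which your one-line identity $\bX_{ik}^*\bX_{kj}^*=\bP_i^*(\bP_k^{*\top}\bP_k^*)\bP_j^{*\top}\le\bX_{ij}^*$ handles more explicitly than the paper does.
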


Theorem \ref{thm:main} guarantees exact separation between clean and corrupted edges in a worst-case scenario setting. Unlike previous PPS works \cite{deepti, chen_partial}, our theory is completely deterministic and does not rely on the assumptions of the underlying distribution of partial permutations. Its deterministic conditions guarantee well-posedness of the separation problem.

\section{Numerical Experiments}
\subsection{Synthetic Data Experiments}
\label{sec:synthetic}
We test MatchFAME, MatchEIG \cite{MatchEig}, Spectral  \cite{deepti}, and PPM \cite{Chen_PPM} on synthetic datasets generated by two different models described in \S\ref{sec:lbc_lac} and \S\ref{sec:ucm}. The underlying graph in both cases is generated by an Erd\"{o}s-R\'{e}nyi model, $G(n,p)$, with probability of edge connection $p=0.5$.
The inclusion of a keypoint in an image $I$ is an independent event from the rest of the keypoints that occurs with probability $p_I=0.8$.
In both cases, the number of images is $n=100$ and the universe size is $m = 20$ (except for the runtime  experiments in \S\ref{sec:ucm}). Since $m$ is typically unknown in practice, all algorithms use the following estimate for it: $\hat m = 2\lceil {M}/{n} \rceil$. In order to demonstrate near exact separation, we use $\gamma = 20$ for the synthetic data, where the rest of the default parameters are as specified in \S\ref{sec:matchfame}.

After generating the graph $G([n],E)$ by the $G(n,p)$ model, we also randomly generate an $n  m \times  m$ ground truth image-universe matching matrix $\bP^{\text{full}}$. Each of its $n$ blocks is obtained by randomly permuting columns of $\bI_{m \times m}$.
We further compute $\bX^{\text{full}} = \bP^{\text{full}} {\bP^{\text{full}}}^\top$.
The models below 
further corrupt $\bX^{\text{full}}$ and result in modified matrix $\hat{\bX}^{\text{full}}$.
We then generate the  keypoint indices, $\{I_i\}_{i=1}^n$, as follows: For each $i \in [n]$, we independently generate $m$ i.i.d. $\sim B(1,0.8)$ random variables and let $I_i$ be the set of indices of random variables with output 1.
For each $i \in [n]$ and the $i$th $m\times m$ block of $\bP^{\text{full}}$, we keep the rows with indices in $I_i$ and discard the rest. This results in a block $\bP_i^*$ of size  $m_i \times m$. The resulting block matrix of all modified blocks is of size $M \times m$
and denoted by  $\bP^*$. We further set $\bX^* = \bP^{*} \bP^{*T}$. For the $ij$-th $m\times m$ block of $\hat \bX^{\text{full}}$, we keep its rows that appear in $I_i$ and columns that appear in $I_j$ and discard the rest. This results in a block of size $m_i \times m_j$. We stack these blocks to form the $M \times M$ matrix $\bX$.

Let $\odot$ denote the element-wise product and $\bZ_{ij}$ denote the $ij$-th block of the output $\bZ$.
Precision and recall were respectively computed as follows:
$$\sum_{ij \in E_b} \langle \bX_{ij}^* \odot \bX_{ij}, \bZ_{ij} \odot \bX_{ij} \rangle \slash \sum_{ij \in E_b} \| \bZ_{ij} \odot \bX_{ij} \|_F^2 \ \text{ and }$$
$${\sum_{ij \in E_b} \langle \bX_{ij}^* \odot \bX_{ij}, \bZ_{ij} \odot \bX_{ij} \rangle} \slash {\sum_{ij \in E_b} \| \bX_{ij}^* \odot \bX_{ij} \|_F^2}.$$

Since in SfM precision is more important than recall \cite{MatchEig}, we find an algorithm superior to another if it achieves significantly higher precision with almost equal or higher recall.

\subsubsection{Data Generated by the LBC and LAC Models}
\label{sec:lbc_lac}
We extend the Local Biased Corruption (LBC) and Local Adversarial Corruption (LAC) models of \cite{IRGCL} to PPS. Both models introduce nonuniform corruption concentrated in some clusters, where for some nodes, most of their neighboring edges are corrupted.
LBC assumes that bad edges are also cycle-consistent, so they behave like good edges. In addition, it uses a sample-rejection procedure so that the distribution of the corrupted partial permutations deviates from the uniform distribution. LAC is even more malicious, it corrupts edges in a way that would seem the absolute permutations of the selected nodes are perturbed versions of  $\bI_{m_i,m}$.

We let $\{\bP_i^c\}_{i \in [n]}$ be i.i.d.~sampled from the Haar measure on $\mathcal{P}_{\text{full}}^{m \times m}$. Starting from ${\hat \bX}^{\text{full}} = \bX^{\text{full}}$, for both models we independently sample $n_c$ nodes as our corruption seed nodes. For each corruption seed node, we independently corrupt its associated edges with probability $0.9$ for LBC and $0.6$ for LAC. Each $ij \in E_b$ in LBC is corrupted as follows:
$$\hat \bX_{ij}^{\text{full}} = \begin{cases}
\hat \bX_{ij}^{\text{full}} \sim \text{Haar}(\mathcal{P}_{\text{full}}^{m \times m}), & \text{if } \langle \bP_i^c {\bP_j^c}^\top, \bX_{ij}^{\text{full}} \rangle > 1;\\
\bP_{i}^c {\bP_j^{c}}^\top, & \text{otherwise}.
\end{cases}$$
In LAC, for $ij \in E_b$,
$\hat{\bX}_{ij}^{\text{full}} = \bQ_{ij}^c {\bP_{j}^{\text{full}}}^\top$, where $\bQ_{ij}^c$ is obtained by randomly permuting 3 of the columns of $\bI_{m \times m}$.


The resulting precision and recall are reported in Figure \ref{fig:LBC} for the LBC model and Figure \ref{fig:LAC} for the LAC model. We note that for both models, MatchFAME recovers almost exactly all bad edges, where other algorithms have lower errors and are not sufficiently close to exact recovery. MatchFAME also obtains the highest recall scores, which are close to 1.

\begin{figure}%
    \centering
    \includegraphics[width=4.1cm]{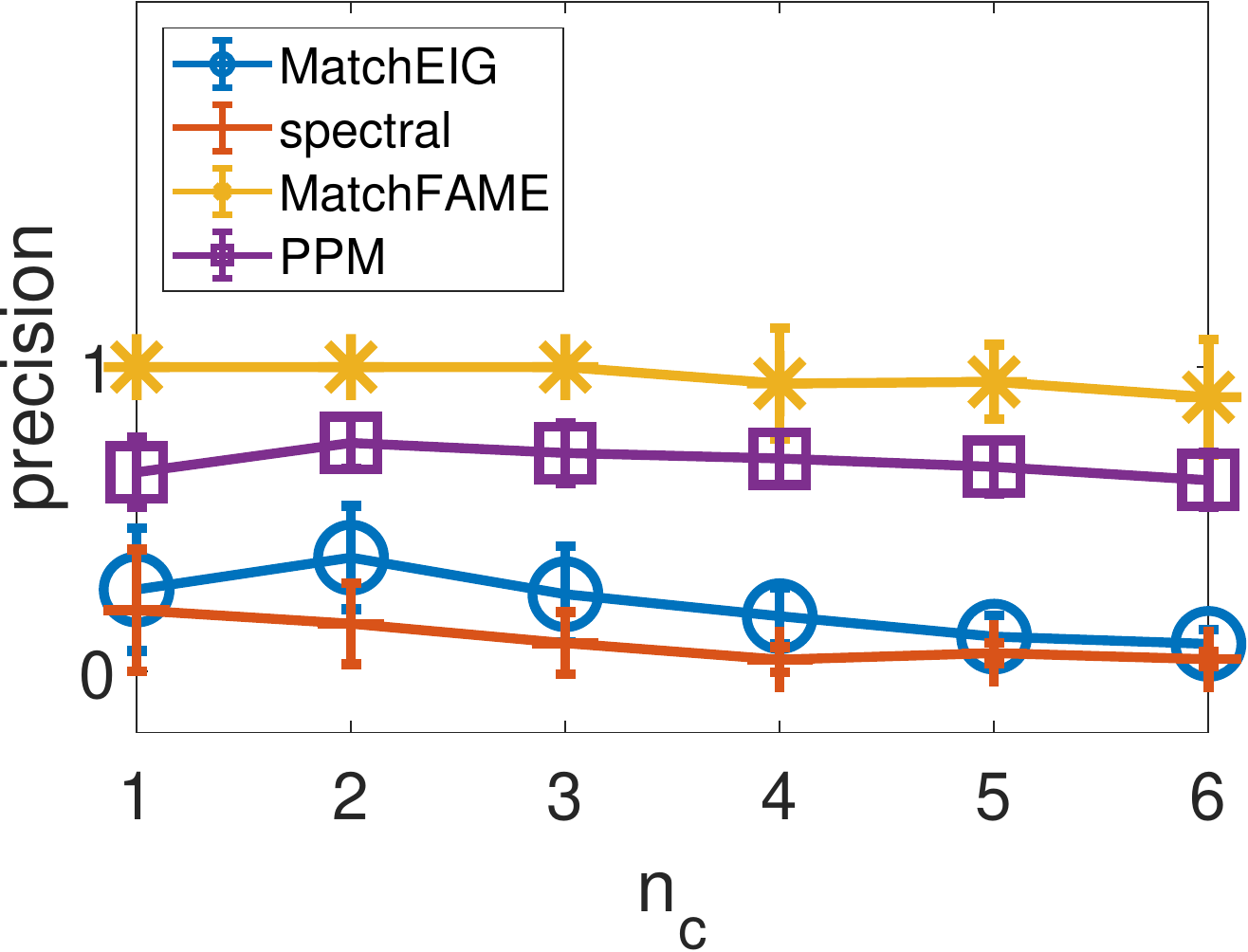} %
    \hfill
    \includegraphics[width=4.1cm]{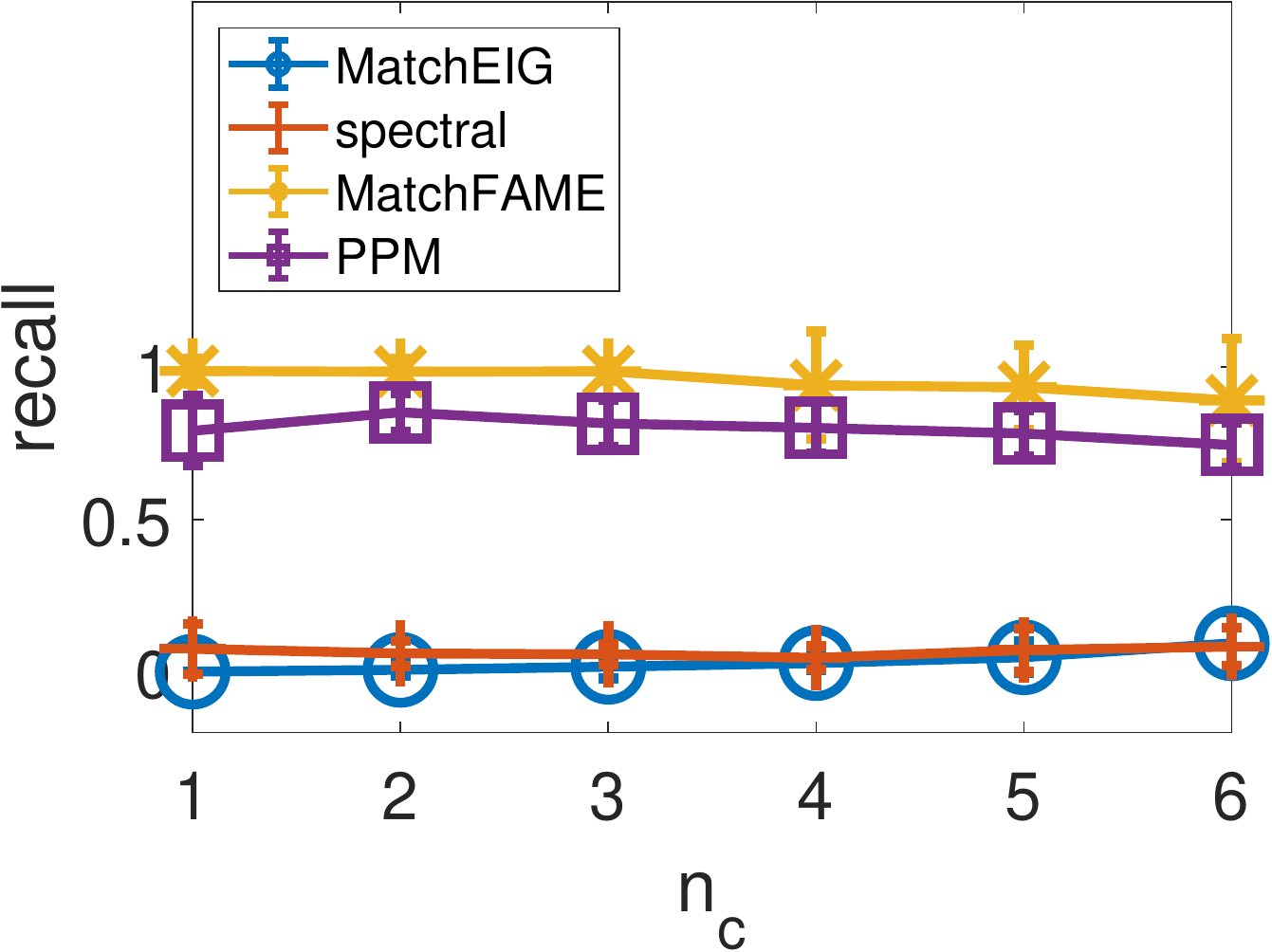} %
    \caption{Average precision and recall for the LBC model}%
    \label{fig:LBC}%
\end{figure}

\begin{figure}%
    \centering
    \includegraphics[width=4.1cm]{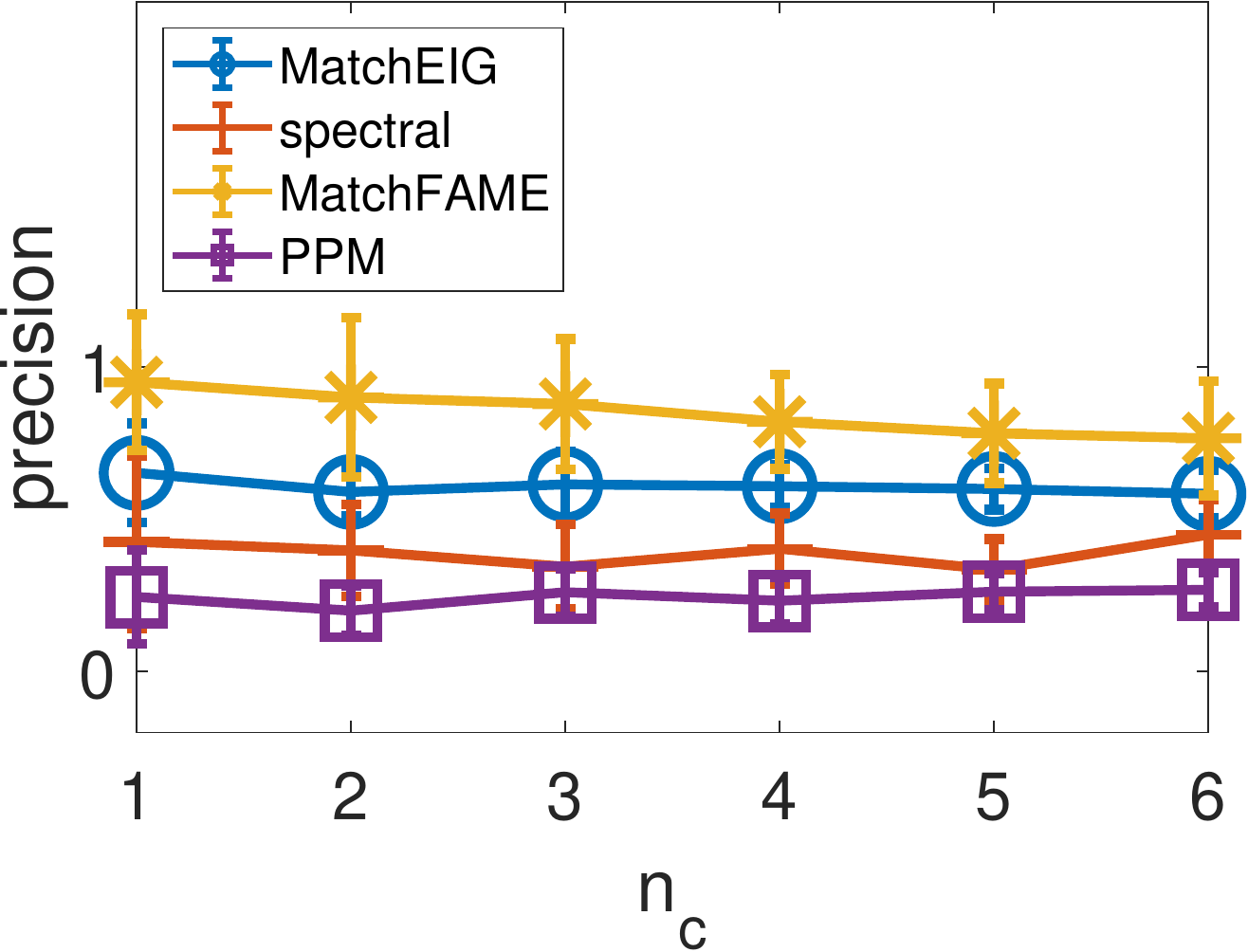} %
    \hfill
    \includegraphics[width=4.1cm]{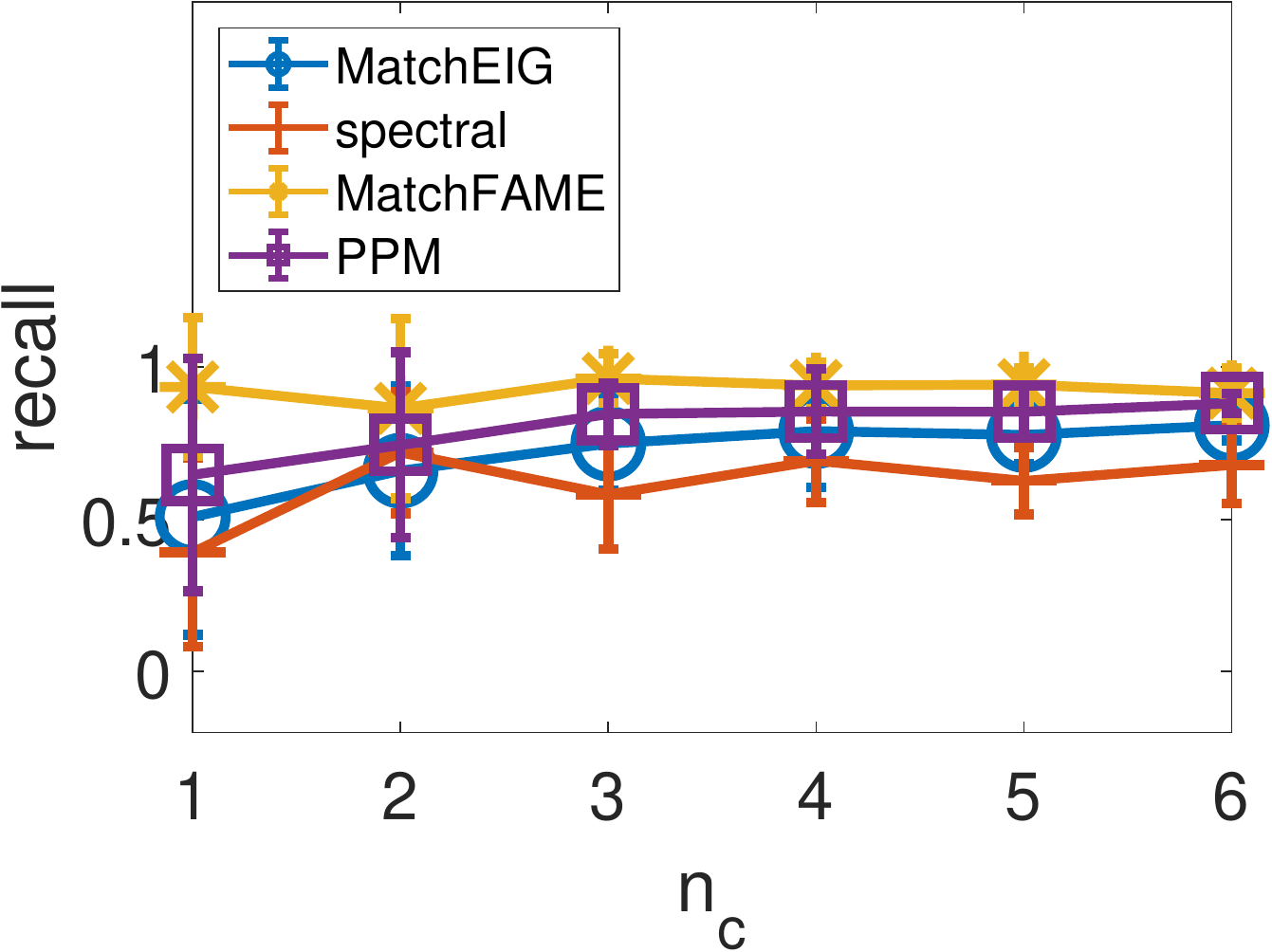} %
    \caption{Average precision and recall for the LAC model}%
    \label{fig:LAC}%
\end{figure}





\subsubsection{Data Generated by the Uniform Corruption Model}
\label{sec:ucm}
We test MatchFAME and competing models on data generated from the uniform corruption model (UCM). In this model, an edge
is randomly selected with probability $q$ and then corrupted as follows: $\hat \bX_{ij}^{\text{full}} \sim \text{Haar}(\mathcal{P}_{\text{full}}^{m \times m}) $.
For unselected edges: $\hat \bX_{ij}^{\text{full}}=  \bX_{ij}^{\text{full}}$.
We let $q$ range between $0.5$ and $0.9$.  Figure \ref{fig:unif2} reports the precision and recall of the different methods. We observe that for UCM, the precision of MatchFAME decreases the slowest among all algorithms. We observe that MatchFAME and PPM have similar high recall, while MatchEIG and Spectral have relatively lower recall.

\begin{figure}%
    \centering
    \includegraphics[width=4.1cm]{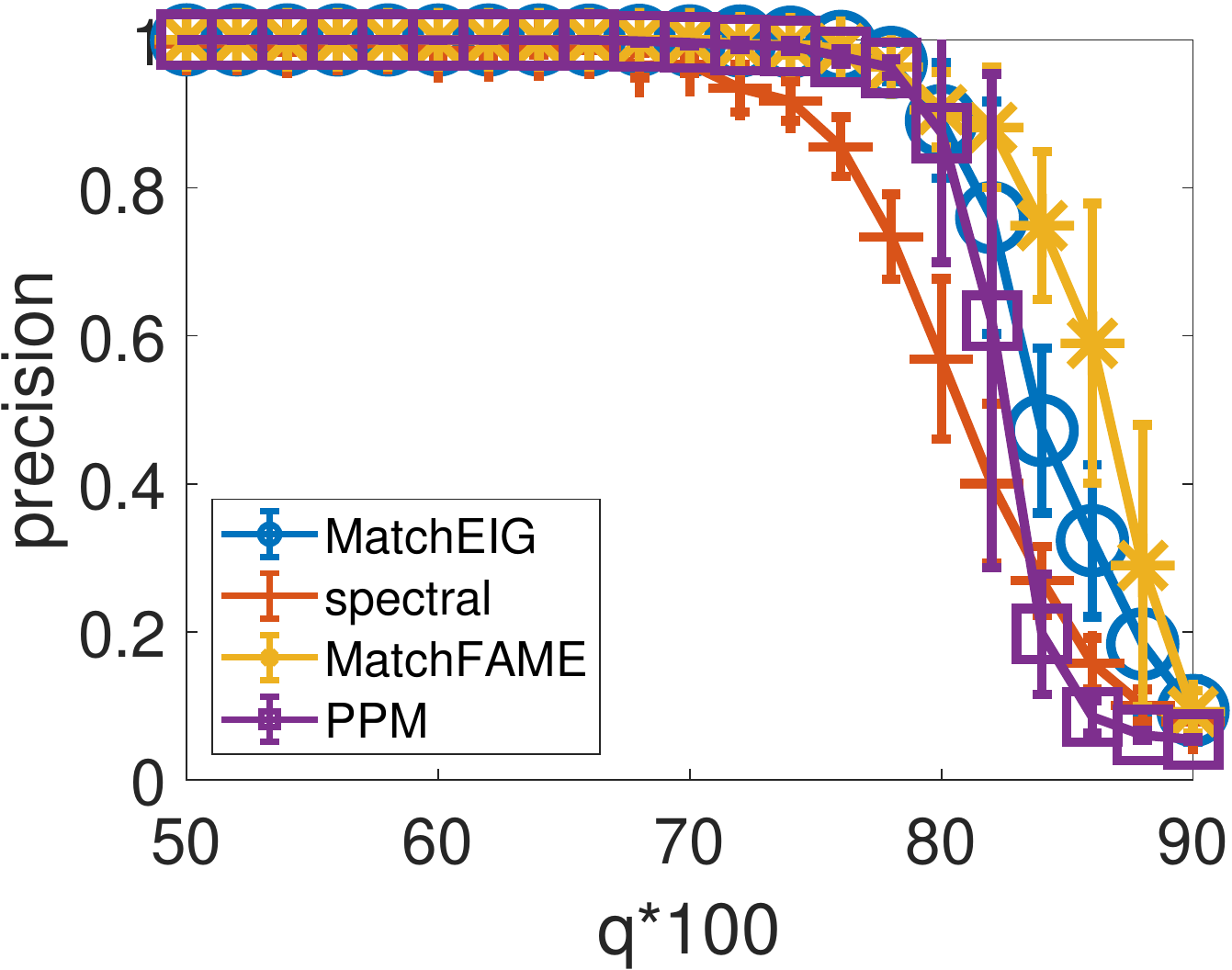} %
    \includegraphics[width=4.1cm]{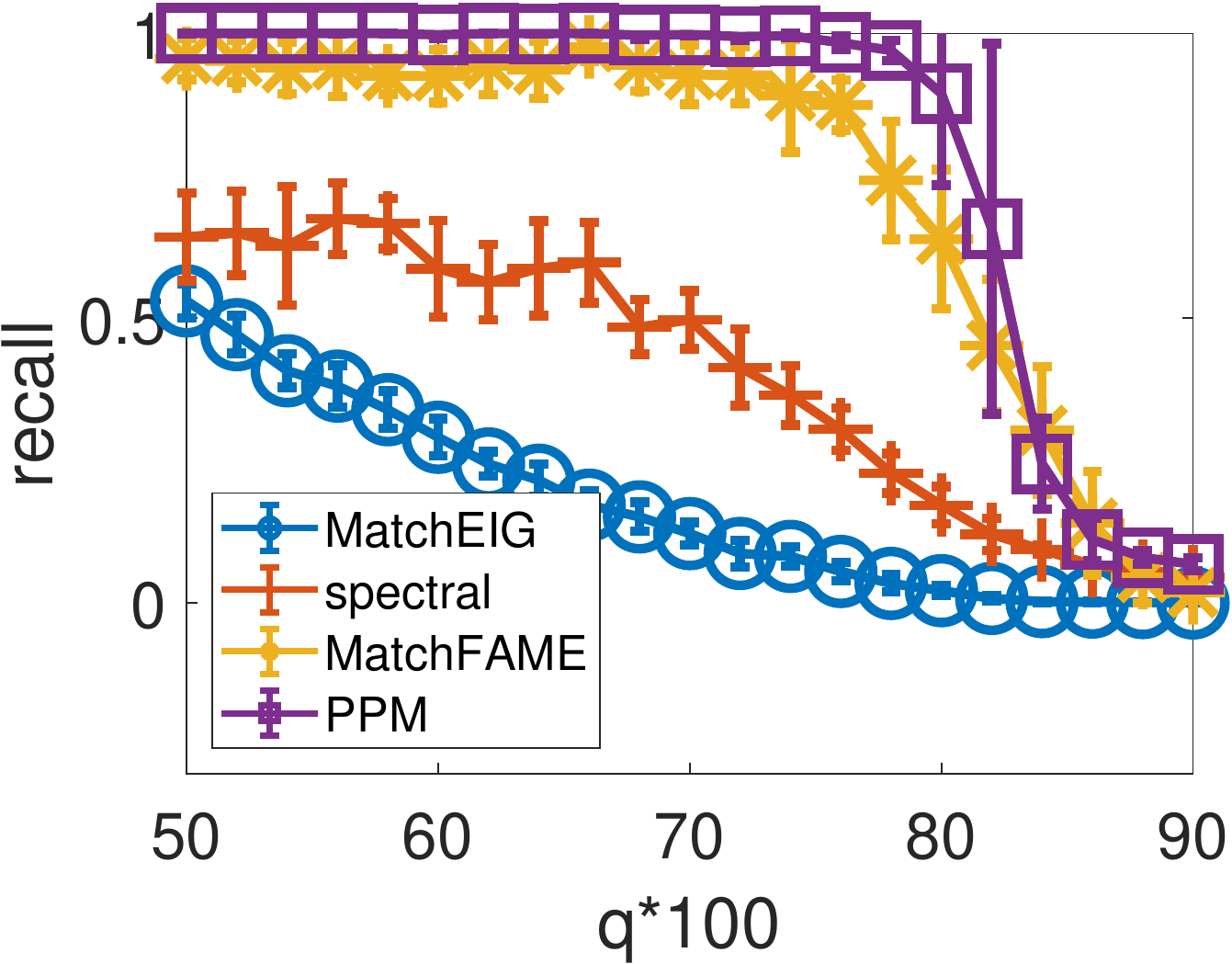} %
    \caption{Average precision and recall for UCM}%
    \label{fig:unif2}%
\end{figure}

\begin{table}
\centering
\resizebox{0.78\columnwidth}{!}{
\begin{tabular}{|c|c||c||c|}
    \hline
    Algorithm &\multicolumn{1}{c||}{$m=500$} &\multicolumn{1}{c||}{$m=1000$}
     &\multicolumn{1}{c|}{$m=2000$} \\ \hline
    MatchFAME & 1 & 2 & 27 \\ \hline
    MatchEIG & 19 & 194 & 1297 \\ \hline
\end{tabular}}
\caption{Runtime (seconds) for UCM with $n = 20$. \label{table:ucm_n}}
\end{table}

\begin{table}
\centering
\resizebox{0.78\columnwidth}{!}{
\begin{tabular}{|c|c||c||c|}
    \hline
     Algorithm &{$n=300$} &{$n=500$}
     &{$n=700$} \\ \hline
    MatchFAME & 86 & 304 & 826 \\ \hline
    MatchEIG & 116 & 1099 & $>5000$ \\ \hline
\end{tabular}}
\caption{Runtime (seconds) for UCM with $m = 20$. \label{table:ucm_m}}
\end{table}

For runtime comparison, we compared MatchFAME with MatchEIG, which is the fastest PPS method.
Table \ref{table:ucm_n} fixes $n=20$ and reports results for different values of $m$ and Table \ref{table:ucm_m} fixes $m=20$ and reports results for different values of $n$ (experiments stopped when the time was larger than 5000 seconds). Clearly, MatchFAME is significantly faster than MatchEIG for large $m$ and sufficiently large $n$.



\begin{table*}[t]
\centering 
\resizebox{1.65\columnwidth}{!}{
\renewcommand{\arraystretch}{1.3}
\tabcolsep=0.1cm
\begin{tabular}{|l||c|c||c|c|c|c|c|c||c|c|c|c|c|c|c|}
\hline
Algorithms & \multicolumn{2}{c||}{}& \multicolumn{6}{c||}{LUD} &
\multicolumn{7}{c|}{MatchFAME+LUD}
  \\\hline
\text{Dataset}& $n$ & $M$ & $n_{\text{LUD}}$  & {\large $\hat{e}_R$} & {\large$\tilde{e}_R$} & {\large $\hat{e}_T$} & {\large$\tilde{e}_T$}   & {$T_{\text{total,LUD}}$} & {$n_{\text{FAME}}$} &
{\large $\hat{e}_R$} & {\large$\tilde{e}_R$} & {\large $\hat{e}_T$} & {\large$\tilde{e}_T$} &  $T_{\text{FAME}}$ &  $T_{\text{total}}$
\\\hline
Alamo & 570 & 606963
& 557
& 20.81 & 16.88 & 8.00 & 5.23 & 7945.9
& 494
& \textbf{17.50} & \textbf{14.19} & \textbf{6.99} & \textbf{4.58} & 6272.0 & 16523.0
\\\hline
Ellis Island & 230 & 178324
& 223
& 2.14 & 1.15 & 22.99 & 22.82 & 1839.2
& 216
& \textbf{1.79} & \textbf{1.02} & \textbf{22.63} & \textbf{21.57} & 341.5 & 2115.7
\\\hline
Gendarmenmarkt & 671 & 338800
& 652
& \textbf{40.14} & 9.30 & \textbf{38.55} & \textbf{18.33} & 3527.7
& 574
& 40.20 & \textbf{9.20} & 42.29 & 21.65 & 662.2 & 3960.2
\\\hline
Madrid Metropolis & 330 & 187790
& 315
& 13.49 & 9.58 & 14.10 & 6.81 & 1579.8
& 266
& \textbf{10.21} & \textbf{6.18} & \textbf{9.90} & \textbf{4.42} & 179.7 & 1663.2
\\\hline
Montreal N.D.& 445 & 643938
& 439
& 2.55 & 1.06 & 1.51 & 0.66 & 5078.9
& 389
& \textbf{1.45} & \textbf{0.79} & \textbf{1.16} & \textbf{0.60} & 4021.0 & 8666.4
\\\hline
Notre Dame & 547 & 1345766
& 545
& 3.72 & \textbf{1.44} & 1.45 & 0.41 & 11315.2
& 526
&\textbf{3.19} & 1.48 & \textbf{1.27} & \textbf{0.40} & 27222.4 & 39189.5
\\\hline
NYC Library&  313 & 259302
& 306
& 3.99 & 2.14 & 6.89 & 2.72 & 1495.9
& 270
& \textbf{2.96} & \textbf{1.94} & \textbf{6.40} & \textbf{2.43} & 219.9 & 2112.5
\\\hline
Piazza Del Popolo & 307 & 157971
& 300
& 7.04 & 4.05 & 6.68 & 2.33 & 1989.9
& 241
& \textbf{1.39} & \textbf{0.83} & \textbf{2.26} & \textbf{1.33} & 309.6 & 2519.3
\\\hline
Piccadilly& 2226 &  1278612
& 2015
& 8.05 & 3.82 & 5.47 & 2.98 & 21903.3
& 1479
& \textbf{4.96} & \textbf{2.86} & \textbf{3.96} & \textbf{2.16} & 26170.1 & 48157.3
\\\hline
Roman Forum&  995 & 890945
& 971
& 6.64 & 5.02& 12.67 & 5.60 & 4858.0
& 733
& \textbf{5.63} & \textbf{4.29} & \textbf{11.73} & \textbf{5.55} & 1548.3 & 6955.3
\\\hline
Tower of London& 440 & 474171
& 431
& \textbf{6.89} & 4.29 & 21.47 & 6.85 & 1759.2
& 359
& 7.13 & \textbf{4.19} & \textbf{13.54} & \textbf{6.32} & 282.5 & 2040.4
\\\hline
Union Square& 733 &  323933
& 663
& 10.40 & 6.70 & 15.27 & 11.14 & 1950.6
& 451
& \textbf{8.26} & \textbf{5.29} & \textbf{10.71} & \textbf{8.66} & 224.9 & 2064.9
\\\hline
Vienna Cathedral&  789 &  1361659
& 758
& 6.45 & 3.10 & 14.18 & 8.12 & 10866.0
& 631
& \textbf{4.21} & \textbf{2.02} & \textbf{11.90} & \textbf{6.90} & 21082.7 & 30066.5
\\\hline
Yorkminster & 412 &  525592
& 407
& \textbf{4.25} & 2.71 & 6.45 & 3.68 & 2267.3
& 359
& 4.26 & \textbf{2.56} & \textbf{6.41} & \textbf{3.37} & 586.4 & 2936.5
\\\hline
\end{tabular}}
\caption{Performance on the Photo Tourism database: $n$ and $M$ are the number of nodes and keypoints, respectively; $n_{\text{LUD}}$ and $n_{\text{FAME}}$ are the remaining number of cameras after the LUD pipeline and our pipeline, respectively; $\hat e_R$ $\tilde e_R$ indicate mean and median errors of absolute camera rotations in degrees, respectively; $\hat e_T$ $\tilde e_T$ indicate mean and median errors of absolute camera translations in meters, respectively; $T_{\text{FAME}}$, $T_{\text{total,LUD}}$ and $T_{\text{total}}$ are the runtime of MatchFAME, the total runtime of LUD pipeline and the total runtime of our pipeline, respectively (in seconds).}
\label{tab:real}
\end{table*}

\subsection{Real Data Experiments}
\label{sec:real}


We test MatchFAME on the Photo Tourism dataset \cite{1dsfm14}. This large-scale dataset contains 14 sets of images for stereo reconstruction. The number of images in each dataset ranges from 230 to 2226. Given initial keypoint matches obtained by \cite{SenguptaAGGJSB17}, we form our pairwise matching matrix and estimate the universe size with $\hat m = 16\lceil M/{n} \rceil$. We apply  MatchFAME with its default parameters.
After getting the output keypoint matching, we use RANSAC to estimate the fundamental matrices for each edge. If an edge has less than 16 remaining keypoint matches, we remove this edge. Then we decompose the resulting fundamental matrices and feed the estimated rotations and translations to the LUD \cite{ozyesil2015robust} camera pose solver to obtain the final estimate of absolute rotations and translations. Note that LUD extracts the largest parallel rigid component of the remaining graph, therefore removing edges can cause loss of cameras. We compare the average and median translation error, average and median rotation error, and runtime to the original pipeline of LUD.
We remark that we did not compare with other PPS algorithms as the ones that were available at the time of the submission were not scalable and could not handle the Photo Tourism dataset.

Table \ref{tab:real} reports results for both the LUD pipeline and the incorporation of MatchFAME within the LUD pipeline. We consider improvement over the LUD result when we obtain a smaller error on at least 3 of the 4 error statistics.
MatchFAME is successful in improving the estimates of translation and rotation without significant loss of cameras in 13 of the 14 datasets. The most significant improvement is on Piazza Del Popolo, where our mean and median rotation error decreased by $80.3\%$ and $79.5\%$ respectively. For most other datasets, the improvement was not marginal. Indeed, in 12 of these 13 datasets, at least one error statistic decreased by more than $15\%$. The only dataset we didn't improve is Gendarmenmarkt, which has very high error because of its symmetric buildings.
MatchFAME removes at most $32.0\%$ of the total cameras. The remaining cameras are sufficient for 3D reconstruction since the Photo Tourism cameras are sampled densely. Therefore our pipeline will not cause significant loss of quality of 3D reconstruction. That is, MatchFAME is able to remove cameras with erroneous keypoints without losing the stereo reconstruction power.

On larger datasets, the total time of MatchFAME is 2 to 3 times larger than that of the original LUD pipeline. We thus find MatchFAME scalable. On smaller datasets, such as Union Square and Madrid Metropolis, MatchFAME consumes much less time in comparison to the original LUD pipeline.

\section{Conclusion}
We develop MatchFAME, a robust, fast, accurate and memory efficient PPS method.
For this purpose we first developed CEMP-Partial for corruption estimation in PPS and theoretically guaranteed it under adversarial corruption. In doing this, we were able to overcome nontrivial challenges of the PPS problem. We also proposed an efficient weighted PPM method that utilizes the output of CEMP-Partial and, in particular, does not require spectral initialization. MatchFAME overcomes the three major challenges of PPS: nonuniform corruption, uneven dimensions and sparsity levels, and the large problem size. Synthetic and real data experiments demonstrate the superior precision of MatchFAME over existing standard methods and its scalability to large datasets due to sparse matrix operations. Our method also has some limitations. For example, our MST initialization is quite heuristic and may produce very sparse initialization. Instead, one may consider using minimum-$K$ spanning trees and initialize the solutions by aggregating the initialization from the different spanning trees. Moreover, our weights for PPM depend on a parameter $\gamma$ and we plan to explore in future work the optimal assignment of this parameter, using the estimates of the corruption levels. Furthermore, our theory is currently limited to CEMP-Partial and to adversarial corruption with certain assumptions. Nevertheless, it demonstrates how to handle some challenges that are unique to PPS. We plan to further extend it. We will first explore other corruption models, such as UCM, where we expect stronger convergence guarantees. We also plan to further develop theory for PPM, in particular, for UCM, and hopefully establish a more complete theory for MatchFAME.

\subsection*{Acknowledgement}
This work was supported by NSF awards 1821266, 2124913.

{\small
\bibliographystyle{ieee_fullname}
\bibliography{egbib}
}

\newpage
\appendix

{
\begin{center}
\huge{Supplemental Material}
\end{center}
}

\section{Additional Experiments on the EPFL dataset}

We test MatchFAME on the 6 EPFL datasets following the experimental setup of \cite{MatchEig}. Each dataset includes 8 to 30 images, unlike the large number of images in the Photo Tourism datasets. Given each dataset, we generate and refine the initial keypoint matches with the same procedure introduced in \cite{MatchEig}. We follow their convention and estimate the universe size with $\hat m = 2\lceil M/{n} \rceil$. We implement MatchFAME with its default parameters, though with two changes described below. Indeed, the EPFL dataset contains a lot of noisy edges and thus the weights produced by PPM within the original MatchFAME algorithm are often small. Furthermore, note that $\mathbf{Proj}$ in \eqref{eq:WPPM} is not scale invariant and that the resulting small weights may lead to overly sparse refined matches. Therefore, we slightly changed the implementation of MatchFAME to overcome this issue. First, in order to obtain a dense initialization of partial permutations using MST, instead of assigning 1 to a random element for each zero column, we assign 1 to a random element for each zero row. Since the number of rows is larger than the number of columns, this modification results in a denser initialization of $[\bP_j^{(0)}]_{j\in [n]}$ than that of the original MatchFAME. Second, to make sure that the final output is also sufficiently dense, we drop the step of the weights' normalization within the PPM iterations, which is described below \eqref{eq:WPPM} (this will increase the overall scale of the edge weights and thus the projected matrix is expected to be denser). We remark that these two changes help alleviate the over-sparseness of the final output and ends up with a higher ratio between the number of refined matches and the number of initial matches, which we denote by $\# M$.

In addition to this version of MatchFAME, we also test Spectral, MatchEIG and MatchALS with the same setting as \cite{MatchEig}. Note that the 'ground truth' is obtained by estimating the projection distance of key points on the epipolar line instead of labeling by hand. Therefore the recall score is not a good benchmark on real data. We thus only report the resulting precision, number of remaining edges and runtime in Table \ref{tab:real2}.

MatchFAME achieves the highest precision of all methods in all datasets. Observing $\# M$, we note that MatchFAME has around $20\%$ fewer matches remaining compared to all algorithms, but as long as there are enough matches for each edge, one can reliably compute relative rotations and translations for SfM tasks. We believe removing around $20\%$ more matches is not an essential drawback. Furthermore, MatchFAME is faster than the other methods. In conclusion, MatchFAME can achieve a reasonable estimate of matches within a significant short amount of time.

\begin{table*}[t]
\centering 
\resizebox{1.6\columnwidth}{!}{
\renewcommand{\arraystretch}{1.6}
\tabcolsep=0.15cm
\begin{tabular}{|l||c|c||c||c|c|c||c|c|c||c|c|c||c|c|c|
|c|c|c|}
\hline
Algorithms & \multicolumn{2}{c||}{ }& \multicolumn{1}{c||}{\multirow{ 2}{*}{Initial}} &
\multicolumn{3}{c||}{\multirow{ 2}{*}{MatchEig}} & \multicolumn{3}{c||}{\multirow{ 2}{*}{Spectral}} &
\multicolumn{3}{c||}{\multirow{ 2}{*}{MatchALS}} &
\multicolumn{3}{c|}{\multirow{ 2}{*}{PPM}} &
\multicolumn{3}{c|}{\multirow{ 2}{*}{MatchFAME}}
  \\
 \text{Dataset} &  \multicolumn{2}{c||}{ } &  \multicolumn{1}{c||}{ } &  \multicolumn{3}{c||}{ } & \multicolumn{3}{c||}{ } & \multicolumn{3}{c||}{ } &  \multicolumn{3}{c|}{ }
 &  \multicolumn{3}{c|}{ (ours) }
\\
  \hline
& $n$ & $\hat m                 $ & PR & PR  &  \#M & T  & PR  &  \#M & T& PR  &  \#M & T& PR  &  \#M & T
& PR  &  \#M & T

\\\hline

Herz-Jesu-P25 & 25 & 517 & 89.6 & 94.2 & 73 & 72 & 92.2 & 81 & 125 & 93.3 & 83 & 9199 & 92.5 & \textbf{88} & 125
& \textbf{95.0} & 78 & \textbf{15}

\\\hline

Herz-Jesu-P8 & 8 & 386 & 94.3 & 95.2 & \textbf{97} & \textbf{1} & 95.3 & 92 & 4 & 95.9 & 76 & 155  & 95.4 & 94 & 5 
& \textbf{95.9} & 83 & 3

\\\hline

Castle-P30 & 30 & 445 & 71.8 & 84.7 & 55 & 64 & 80.6 & 72 & 99 &  80.4 & 76 & 13583 & 80.2 & \textbf{77} & 112
& \textbf{87.9} & 61 & \textbf{15}

\\\hline

Castle-P19 & 19 & 314 & 70.1 & 79.7 & 57 & 23 & 76.3 & \textbf{76} & 21 & 77.0 & 74 & 1263 & 77.5 & \textbf{76} & 33
& \textbf{83.0} & 56 & \textbf{4}

\\\hline

Entry-P10 & 10 & 432 & 75.4 & 79.9 & 78 & 11 & 82.1 & 78 & 30 & 77.3 & 77 & 322 & 80.7 & \textbf{83} & 34
& \textbf{83.1} & 69 & \textbf{5}

\\\hline

Fountain-P11 & 11 & 374 & 94.2 & 95.4 & 81 & 8 & 95.4 & 93 & 14 & 95.7 & 82 & 333 & 95.6 & \textbf{94} & 18
& \textbf{96.7} & 81 & \textbf{5}

\\\hline

\end{tabular}
}

\caption{Performance on the EPFL datasets. $n$ is the number of cameras; $\hat m$, the approximated $m$, is twice the averaged $m_i$ over $i\in [n]$; PR refers to the precision $|\hat E \cap E_g|/|\hat E|$, which is expressed in percentage (the higher the better); \#M is the ratio (expressed in percentage) between the number of refined matches and the number of initial matches; $T$ is runtime in seconds. }

\label{tab:real2}
\end{table*}

\section{Clarifications}

We clarify some definitions and expand on various claims mentioned in the paper.

\subsection{More on Cycle Consistency and Inconsistency}

We referred to a cycle $ijk$ as consistent whenever $\bX_{ij}\bX_{jk}\leq \bX_{ik}$, $\bX_{jk}\bX_{ki}\leq \bX_{ji}$ and $\bX_{ki}\bX_{ij}\leq \bX_{kj}$.
Note that $\bX_{ij}\bX_{jk}$ is a binary matrix with ones whenever there are paths of lengths 2 between keypoints of images $i$ and $k$ and $\bX_{ik}$ is binary matrix with ones whenever there are paths of lengths 1 (single edges) between keypoints of image $i$ and $k$. That is, $\bX_{ij}\bX_{jk}\leq \bX_{ik}$ means that  if keypoints
$t_i \in [m_i]$ and $t_k \in [m_k]$ (in images $i$ and $k$, respectively) are both matched to a keypoint $t_j$ in image $j$, then they are matched to each other. Therefore, any cycle $ijk$ with corresponding partial permutations $\bX_{ij}$, $\bX_{jk}$, $\bX_{ki}$  is consistent if and only if for any $t_i \in [m_i]$, $t_j \in [m_j]$ and $t_k \in [m_k]$: If two of the events $\bX_{ij}(t_i,t_j)=1$, $\bX_{jk}(t_j,t_k)=1$, $\bX_{ki}(t_k,t_i)=1$ hold true, then the third one holds true as well.

This equivalent reformulation of cycle consistency further clarifies the definition of $d_{ijk}$ in \eqref{eq:dijk}.
For fixed $\bX_{ij}, \bX_{jk}, \bX_{ki} \in \mathcal{P}^{l_1,l_2}$, the denominator of the fraction in \eqref{eq:dijk} can be viewed as the number of combinations of three keypoints $a$, $b$, $c$,
 such that at least two of the three events
 \begin{equation}
 \label{eq:3_cond}
 \bX_{ij}(a,b)=1, \ \bX_{jk}(b,c)=1, \text{ and }  \bX_{ki}(c,a)=1
\end{equation}
hold. Furthermore, the numerator of the fraction in \eqref{eq:dijk} can be viewed as the total number combinations of three keypoints $a$, $b$, $c$,
 such that all the three events in \eqref{eq:3_cond} hold. Thus, the fraction in \eqref{eq:dijk} indeed measures the level of cycle consistency, and consequently $d_{ijk}$ measures the cycle inconsistency.

We remark that an inequality of two full permutation matrices must be an equality. Therefore, for permutation synchronization the above definition of cycle consistency is equivalent with $\bX_{ij}\bX_{jk} = \bX_{ik}$ (or equivalently, $\bX_{jk}\bX_{ki} = \bX_{ji}$ or $\bX_{ki}\bX_{ij} = \bX_{kj}$ or
$\bX_{ki}\bX_{ij} \bX_{jk} = \bI$). That is, our definition of cycle consistency is a direct extension of the one in group synchronization.

\subsection{Cycle-verifiability Helps in Verifying Matches in Cycles}
We further interpret the cycle-verifiable condition and clarify its name. We claim that if $ijk$ is a good cycle (w.r.t.~$ij$) ensured by Definition \ref{def:cycle_verifiable} with $a \in I_i$ and
$c \in I_j$, then one can verify whether $a$ and $c$ correctly match (i.e., $h(a) = h(c)$) using $b \in I_k$.
Indeed, since $b$ matches $a$ and $k \in G_{ij}$, $h(a) = h(b)$. If $b$ and $c$ match then since $k \in G_{ij}$ $h(b)=h(c)$ and consequently $h(a) = h(c)$. Assume on the other hand that $b$ and $c$ do not match. If $b$ matches another point $c'$, then since $k \in G_{ij}$, $h(b)=h(c') \neq h(c)$. If $b$ does not match any point in $U_j$, then since $k \in G_{ij}$, $h(b) \not \in h(U_j)$ (otherwise there exists $c' \in U_j$ such that $h(c')=h(b)$ and since $k \in G_{ij}$ there has to be a match between $b$ and $c'$.). Since $h(c) \in h(U_j)$ and $h(b) \not \in h(U_j)$, $h(b) \neq h(c)$.

\section{Proof of Theorem \ref{thm:main}}
The proof establishes two lemmas, Lemmas \ref{lemma1} and \ref{lemma2}, and then uses them to conclude Theorem \ref{thm:main}. It is rather technical and not so easy to motivate. In order to provide more intuition, we added some clarifying figures.

{\bf Convention for figures:} In all of these figures, we designate by green lines good keypoint matches, by red lines bad keypoint matches and by dashed orange lines missing keypoint matches. All of these occur between keypoints of two different images.
On the other hand, matches between keypoints in an image and universal 3D keypoints are designated by black dotted lines (these correspond to our formal $h$ function). We further color the universal 3D keypoints (in $U_{ij}$), which represent elements of $U_{ij}^{\text{good}}$, by green. We also color the universal 3D keypoints, which represent elements of $U_{ij}^{\text{bad}}$, in red.
In Figure \ref{fig:Lemma1_1}, we slightly extend the latter convention and explain it in its caption.

{\bf Terminology Review:}
Recall that $n_{\Delta} = \tr(\bX_{ij} \bX_{jk} \bX_{ki})$,
$m$ is the number of all 3D keypoints, $m_{ij}$ is the number of 3D keypoints that correspond to the 2D keypoints of images $i$ or $j$ and among these, $m_{ij}^{\text{bad}}$ is the number of keypoints that match wrong keypoints in the other image, or match no keypoint if a ground-truth match exists.
We also denote the number of the rest of points by $m_{ij}^{\text{good}}$
(that is, $m_{ij}^{\text{good}} = m_{ij} -m_{ij}^{\text{bad}}$)
and recall that these keypoints match the ground-truth keypoints or, do not match any keypoint, if no ground-truth matches exist.

\subsection{Upper Bound for the Cycle Inconsistency of Good Edges}
This section includes the proof of the following lemma:
\begin{lemma}
\label{lemma1}
For any $ij \in E_g$, $d_{ijk} \le m (s_{ik}^* + s_{jk}^*)$.
\end{lemma}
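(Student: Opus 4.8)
The plan is to reduce the claim to two extreme regimes after first nailing down the combinatorial reading of $d_{ijk}$ already sketched for cycle (in)consistency. Writing keypoints $a,b,c$ in images $i,j,k$ and the three matching events $\bX_{ij}(a,b)=1$, $\bX_{jk}(b,c)=1$, $\bX_{ki}(c,a)=1$, I would first record that the partial-permutation structure (at most one nonzero per row and per column, so every product matrix stays $0/1$) makes each of $n_i,n_j,n_k$ an honest count of the triples $(a,b,c)$ satisfying one prescribed pair of these events, while $n_\Delta=\tr(\bX_{ij}\bX_{jk}\bX_{ki})$ counts the triples satisfying all three. Since the three pairs are distinct, a triple with all three events is counted once in each of $n_i,n_j,n_k$, and a triple with exactly two events once in total, giving the identity $n_i+n_j+n_k=N_2+3n_\Delta$, where $N_2\ge 0$ counts triples with exactly two events. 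Hence $d_{ijk}=N_2/(N_2+3n_\Delta)\in[0,1]$, and when $n_i+n_j+n_k=0$ (no overlapping keypoints) the statement is vacuous under the convention for $d_{ijk}$.

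Next I would dispose of the clean regime $s_{ik}^*=s_{jk}^*=0$. Then $ik,jk\in E_g$, and together with $ij\in E_g$ all three blocks equal their ground-truth values $\bX_{ij}=\bP_i^*\bP_j^{*\top}$, and likewise for $\bX_{jk},\bX_{ki}$. Because $\bP_j^{*\top}\bP_j^*$ is a diagonal $0/1$ matrix, $\bP_j^{*\top}\bP_j^*\le \bI$ entrywise, so
$$\bX_{ij}\bX_{jk}=\bP_i^*\,(\bP_j^{*\top}\bP_j^*)\,\bP_k^{*\top}\ \le\ \bP_i^*\bP_k^{*\top}=\bX_{ik},$$
and symmetrically for the other two products; thus the cycle is consistent, $N_2=0$, $d_{ijk}=0$, matching the right-hand side $m(s_{ik}^*+s_{jk}^*)=0$.

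It remains to treat the corrupted regime, where at least one of $s_{ik}^*,s_{jk}^*$ is positive. Here the factor $m$ in the statement is exactly the slack that lets me conclude with no finer counting: since $m_{ik}=|U_{ik}|\le m$ and $m_{jk}\le m$,
$$m\,(s_{ik}^*+s_{jk}^*)=\tfrac{m}{m_{ik}}\,m_{ik}^{\text{bad}}+\tfrac{m}{m_{jk}}\,m_{jk}^{\text{bad}}\ \ge\ m_{ik}^{\text{bad}}+m_{jk}^{\text{bad}}\ \ge\ 1,$$
the last inequality holding because $m_{ik}^{\text{bad}}+m_{jk}^{\text{bad}}$ is a positive integer in this regime. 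Combining with $d_{ijk}\le 1$ from the first step yields $d_{ijk}\le 1\le m(s_{ik}^*+s_{jk}^*)$, which finishes the argument.

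The main obstacle is the first step: establishing the counting identity $n_i+n_j+n_k=N_2+3n_\Delta$ rigorously — that is, checking that the partial-permutation property keeps each product binary so its number of nonzeros is a genuine triple count, and that every all-three triple is multiply-counted exactly three times — together with the resulting $d_{ijk}\in[0,1]$ and a sensible convention in the empty-overlap case. The clean-cycle verification and the $m_{ik}\le m$ estimate are then routine, and the accompanying figures mainly serve to make the triple-counting bookkeeping transparent rather than to carry any analytic weight.
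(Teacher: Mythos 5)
Your proof is correct, but it takes a genuinely different and substantially shorter route than the paper's. The paper also begins with the triple-counting reformulation (its bad $(i,jk)$, $(j,ki)$, $(k,ij)$ tuples are exactly your exactly-two-event triples, giving $d_{ijk}=N_2/(N_2+3n_{\Delta})$), but then does real combinatorial work: it splits on $n_{\Delta}=0$ versus $n_{\Delta}\ge 1$, and in the latter case constructs an injective map $(a,b,c)\mapsto h(c)$ from each set of bad tuples into $U_{ik}^{\text{bad}}\cup U_{jk}^{\text{bad}}$ (injectivity resting on the partial-permutation structure and on $ij\in E_g$), yielding $N_2\le 3(m_{ik}^{\text{bad}}+m_{jk}^{\text{bad}})$, and concludes through the subadditivity of $x\mapsto x/(x+1)$ with the chain $d_{ijk}\le m_{ik}s_{ik}^*+m_{jk}s_{jk}^*\le m(s_{ik}^*+s_{jk}^*)$. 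You instead notice that the factor $m$ makes the stated bound coarse enough to bypass the injection entirely: $d_{ijk}\le 1$ is immediate from $n_{\Delta}\ge 0$, while if, say, $s_{ik}^*>0$ then $m s_{ik}^*=(m/m_{ik})\,m_{ik}^{\text{bad}}\ge 1$ since $m_{ik}\le m$, and if both corruption levels vanish then all three blocks equal their ground truth, so $\bX_{ij}\bX_{jk}=\bP_i^*(\bP_j^{*\top}\bP_j^*)\bP_k^{*\top}\le \bX_{ik}$ via $\bP_j^{*\top}\bP_j^*\le \bI$, forcing exact cycle consistency and $d_{ijk}=0$ --- a clean algebraic proof of a fact the paper only asserts parenthetically inside its $n_{\Delta}=0$ case. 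What the paper's heavier argument buys is the strictly sharper intermediate bound $d_{ijk}\le m_{ik}s_{ik}^*+m_{jk}s_{jk}^*$ (which would permit replacing $m$ by $\max_{ij}m_{ij}$ and thereby weakening the conditions on $\lambda$ and $r$ in Theorem~\ref{thm:main}), and its tuple-and-injection machinery is reused wholesale in Lemma~\ref{lemma2}; your route buys brevity and suffices verbatim for the only way Lemma~\ref{lemma1} is invoked downstream. One shared caveat: when $n_i+n_j+n_k=0$, the quantity $d_{ijk}$ in \eqref{eq:dijk} is $0/0$; you flag this explicitly as a matter of convention, while the paper implicitly excludes it (its claim that $n_{\Delta}=0$ implies $d_{ijk}=1$ presupposes a nonzero denominator), so your treatment of this degenerate case is no weaker than the paper's.
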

We remark that in the case of group synchronization, in particular, PS, one can easily show that for any $ij \in E$, $|d_{ijk}-s_{ij}^*| \le s_{ik}^* + s_{jk}^*$ (see Lemma 1 of \cite{cemp}). Consequently for $ij \in E_g$,  $d_{ijk} \le s_{ik}^* + s_{jk}^*$. However, in PPS, without the full group structure with a bi-invariant metric, it is harder to prove the weaker bound of Lemma \ref{lemma1}. The proof below involves various discrete combinatorial arguments.

\begin{proof}
Assume first that $n_{\Delta} = 0$ and note that \eqref{eq:dijk} implies $d_{ijk}=1$. Since $d_{ijk} \not =0$ and $ij \in E_g$, $s_{ik}^*$ and $s_{jk}^*$ cannot be both zero (otherwise this and the fact that $ij \in E_g$ imply that $ijk$ is cycle-consistent and thus $d_{ijk}=0$). Without loss of generality, assume $s_{jk}^*>0$. We note that
$$s_{jk}^* = \frac{m_{jk}^{\text{bad}}}{m_{jk}} \ge \frac{1}{m},$$
which implies the desired bound:
$$d_{ijk} = 1 = m \cdot \frac{1}{m} \le m s_{jk}^* \le m(s_{ik}^* + s_{jk}^*).$$

Assume next that $n_{\Delta}>0$, or equivalently,
\begin{equation}
\label{eqn:lemma1.1}
n_{\Delta} \ge 1.
\end{equation}

The next arguments require additional definitions and observations.
We recall that any element of $I_i$ represents a 2D keypoint in image $i$.
This keypoint is associated with the index vector $(i,j)$, where $j=1,\ldots,m_i$ and we can thus view $I_i$ as the set of $m_i$ index vectors.
For cycle $ijk$, $(a,b,c) \in I_i \times I_j \times I_k$ is an $(i,jk)$ tuple if there is a match between $a$ and both $b$ and $c$ (the match can be either good or bad). If $(a,b,c)$ is an $(i,jk)$ tuple and there is no match between $b$ and $c$, then we refer to $(a,b,c)$ as a bad $(i,jk)$ tuple, otherwise, it is a good $(i,jk)$ tuple. For example, in Figure \ref{fig:Lemma1_1}, there are three $(i,jk)$ tuples: $(p_{i,2},p_{j,1},p_{k,1})$, $(p_{i,3},p_{j,3},p_{k,2})$ and $(p_{i,4},p_{j,2},p_{k,3})$. We note that $(p_{i,2},p_{j,1},p_{k,1})$ and $(p_{i,4},p_{j,2},p_{k,3})$ are bad $(i,jk)$ tuples and $(p_{i,3},p_{j,3},p_{k,2})$ is a good $(i,jk)$ tuple.
For cycle $ijk$, we denote by $A_{i,jk}$ the set of $(i,jk)$ tuples in $I_i \times I_j \times I_k$, by $A_{i,jk}^{\text{bad}}$ the set of bad $(i,jk)$ tuples and by $A_{i,jk}^{\text{good}}$ the set of good $(i,jk)$ tuples.

Recall that for a cycle $ijk$, $n_i = \nnz(\bX_{ki}\bX_{ij})$, $n_j$ and $n_k$ are analogously defined, and  $n_{\Delta} = \tr(\bX_{ij}\bX_{jk}\bX_{ki})$.
Also recall the notation $p_{i,j}$ for elements of $I_i$.
We note that $(a,b,c)$ is an $(i,jk)$ tuple if and only if (iff)
$a=p_{i,u}$, $b=p_{j,v}$, $c=p_{k,w}$ and
both $\bX_{ki}(w,u)=1$ and $\bX_{ij}(u,v)=1$ (so $p_{i,u}$ matches to both $p_{j,v}$ and $p_{k,w}$). We further note that the latter two requirements are equivalent with $\bX_{ki}\bX_{ij}(w,v)=1$. Indeed, since $\bX_{ki}\bX_{ij}(w,v)= \sum_{u' \in [m_i]}\bX_{ki}(w,u') \bX_{ij}(u',v)$
and $\bX_{ki}\bX_{ij}$ is a partial permutation, then  $\bX_{ki}\bX_{ij}(w,v)=1$ iff $\bX_{ki}(w,u)=\bX_{ij}(u,v)=1$ for some $u \in [m_i]$.
Therefore
$$|A_{i,jk}| = n_i.$$
By the same way we note that
$$|A_{j,ki}| = n_j \ \text{ and } \ |A_{k,ij}| = n_k.$$

Similarly, note that  $(a,b,c)$ is a good $(i,jk)$ tuple iff $a=p_{i,u}$, $b=p_{j,v}$, $c=p_{k,w}$, $\bX_{ki}(w,u)=1$, $\bX_{ij}(u,v)=1$ and $\bX_{jk}(v,w)=1$.
The latter three requirements are equivalent with
$\bX_{ij}\bX_{jk}\bX_{ki}(u,u)=1$.
Indeed, the following equation
\begin{multline*}
\bX_{ij}\bX_{jk}\bX_{ki}(u,u)=\\
\sum_{v' \in [m_j],w' \in [m_{k}]} \bX_{ij}(u,v')\bX_{jk}(v',w')\bX_{ki}(w',u)
\end{multline*}
and the fact that $\bX_{ij}\bX_{jk}\bX_{ki}$ is a partial permutation imply this equivalence. Therefore,
$$|A_{i,jk}^{\text{good}}| = n_{\Delta} \ \text{  and } \ |A_{i,jk}^{\text{bad}}| = n_i - n_{\Delta}.$$ Similarly, we conclude that
$$
|A_{j,ki}^{\text{bad}}| = n_j - n_{\Delta}
 \ \text{  and } \ |A_{k,ij}^{\text{bad}}| = n_k - n_{\Delta}.$$
Using these observations and \eqref{eq:dijk}, one can rewrite $d_{ijk}$ as follows
\begin{equation}
\label{eqn:lemma1.2}
d_{ijk} = \frac{|A_{i,jk}^{\text{bad}}| + |A_{j,ki}^{\text{bad}}| + |A_{k,ij}^{\text{bad}}|}{|A_{i,jk}^{\text{bad}}| + |A_{j,ki}^{\text{bad}}| + |A_{k,ij}^{\text{bad}}| + 3n_{\Delta}}.
\end{equation}

Let us assume that $(a,b,c) \in A_{i,jk}^{\text{bad}}$ and show that
\begin{equation}
    \label{eq:hc_property}
h(c)\in U_{ik}^{\text{bad}} \cup U_{jk}^{\text{bad}}.
\end{equation}
The assumption $ij \in E_g$ implies that there is a good match between $a$ and $b$.

We claim that if there is also a good match between $a$ and $c$, then $h(b) = h(a) = h(c) \in U_{ij} \cap U_{ik} \cap U_{jk}$.
Indeed, assume $a = p_{i,u}$, $b = p_{j,v}$, $c = p_{k,w}$ and $h(a) = l$, i.e., $\bP_i^*(u,l) = 1$. Because there exists a good match between $a$ and $b$, $\bX_{ij}^*(u,v) = \bX_{ij}(u,v) = 1$. Since $\bX_{ij}^*(u,v) = \bP_i^* \bP_j^{*T}(u,v) = \sum_{d \in [m]}\bP_i^*(u,d) \bP_j^*(v,d)$ and $\bP_j^*$ is a partial permutation, $\bP_j^*(v,l) = 1$ and thus $h(b)=l$. Similarly, since there exists a good match between $a$ and $c$, $\bP_k^*(w,l) = 1$ and $h(c)=l$. Therefore $h(a) = h(b) = h(c) = l \in U_{ij} \cap U_{ik} \cap U_{jk}$.

\begin{figure}
    \centering
    \includegraphics[width = 8cm]{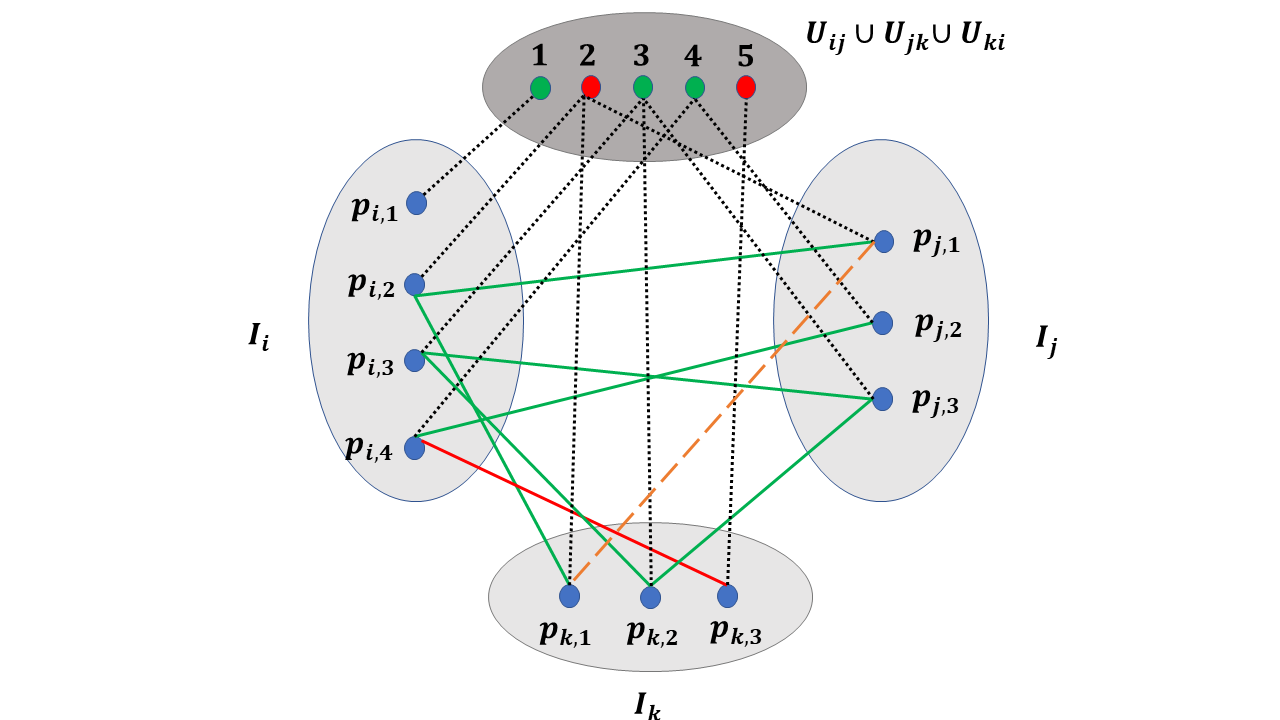}
    \caption{A demonstration for clarifying the definition of $(i,jk)$ tuples, good $(i,jk)$ tuples, bad $(i,jk)$ tuples as well as the function $f$. The actual use of the figure is clarified when it is referred to.
    Unlike other figures (that focus on points in $U_{ij}$ and not in $U_{ij} \cup U_{jk} \cup U_{ki}$), the red dots correspond to keypoints in $U_{ij}^{\text{bad}} \cup U_{jk}^{\text{bad}} \cup U_{ik}^{\text{bad}}$ and the green dots correspond to the rest of keypoints in $U_{ij} \cup U_{jk} \cup U_{ki}$. We note that $U_{ik}^{\text{bad}} = \{ 5 \}$, $U_{jk}^{\text{bad}} = \{ 2 \}$ and $U_{ij}^{\text{bad}} = \varnothing$. }
    \label{fig:Lemma1_1}
\end{figure}

Since $(a,b,c) \in A_{i,jk}^{\text{bad}}$, there is no match between $b$ and $c$ and thus $h(c) \in U_{jk}^{\text{bad}}$, which implies \eqref{eq:hc_property}.

If on the other hand, there is a bad match between $a$ and $c$, then $h(c) \in U_{ik}^{\text{bad}}$, which also implies \eqref{eq:hc_property}.

In view of \eqref{eq:hc_property}, the function $f(a,b,c) = h(c)$ maps $A_{i,jk}^{\text{bad}}$ to $U_{ik}^{\text{bad}} \cup U_{jk}^{\text{bad}}$.
We note that this function is injective. Indeed, since $\bX_{ik}$ and $\bX_{jk}$ are partial permutations, for any $h(c) \in U_{ik}^{\text{bad}} \cup U_{jk}^{\text{bad}}$, where $c \in I_k$, if there are $a$ and $b$ in $I_i$ and $I_j$, respectively, such that there are matches between them and $c$ and $f(a,b,c) =h(c)$, then $a$ and $b$ are unique. Figure \ref{fig:Lemma1_1} demonstrates $f$ and its proven injectivity in a special case. In this case, $A_{i,jk}^{\text{bad}} = \{ (p_{i,4},p_{j,2},p_{k,3}), (p_{i,2},p_{j,1},p_{k,1})\} $, $f (p_{i,4},p_{j,2},p_{k,3}) = h(p_{k,3}) = 5  \in U_{ik}^{\text{bad}}$ and $f (p_{i,2},p_{j,1},p_{k,1}) = h(p_{k,1}) = 2 \in U_{jk}^{\text{bad}}$.

By the cardinality property of an injective map,
\begin{equation}
\nonumber
|A_{i,jk}^{\text{bad}}| \le |U_{ik}^{\text{bad}} \cup U_{jk}^{\text{bad}}| \le |U_{ik}^{\text{bad}}| + |U_{jk}^{\text{bad}}| = m_{ik}^{\text{bad}} + m_{jk}^{\text{bad}}.
\end{equation}
Similarly, the same bound holds for $|A_{j,ki}^{\text{bad}}|$ and $|A_{k,ij}^{\text{bad}}|$
and consequently,
\begin{equation}
\label{eqn:lemma1.4}
|A_{i,jk}^{\text{bad}}| +  |A_{j,ki}^{\text{bad}}| + |A_{k,ij}^{\text{bad}}| \le  3(m_{ik}^{\text{bad}} + m_{jk}^{\text{bad}}).
\end{equation}
Therefore the combination of
\eqref{eqn:lemma1.1} and \eqref{eqn:lemma1.4} with the definition of $s_{ik}^*$ as well as $s_{jk}^*$ (i.e., noting that
$s_{ik}^* = {m_{ik}^{\text{bad}}}/{m_{ik}}= 1-{m_{ik}^{\text{good}}}/{m_{ik}}$ and
$s_{jk}^* = {m_{jk}^{\text{bad}}}/{m_{jk}}=1-{m_{jk}^{\text{good}}}/{m_{jk}}$)
yields
\begin{align*}
    d_{ijk} &= \frac{|A_{i,jk}^{\text{bad}}| + |A_{j,ki}^{\text{bad}}| + |A_{k,ij}^{\text{bad}}|}{|A_{i,jk}^{\text{bad}}| + |A_{j,ki}^{\text{bad}}| + |A_{k,ij}^{\text{bad}}| + 3n_{\Delta}}\\
    & \le \frac{3(m_{ik}^{\text{bad}} + m_{jk}^{\text{bad}})}{3(m_{ik}^{\text{bad}} + m_{jk}^{\text{bad}})+3} \\
    & \le \frac{3 m_{ik}^{\text{bad}}}{3m_{ik}^{\text{bad}} + 3} + \frac{3 m_{jk}^{\text{bad}}}{3m_{jk}^{\text{bad}} + 3}\\
    & = \frac{3s_{ik}^*}{3s_{ik}^* + \frac{3}{m_{ik}}} + \frac{3s_{jk}^*}{3s_{jk}^* + \frac{3}{m_{jk}} }\\
    & \le m_{ik}s_{ik}^* +
    m_{jk} s_{jk}^* \\
    & \le m (s_{ik}^* + s_{jk}^*).
\end{align*}
\end{proof}

\subsection{Lower Bound for the Averaged Cycle Inconsistency Among Good Cycles}
This section includes the proof of the following lemma:

\begin{lemma}
\label{lemma2}
If $G = (V,E)$ is $p_v$-cycle verifiable, then
\begin{equation}
\label{eq:lemma2}
\frac{1}{3}p_v s_{ij}^* \le
\frac{1}{|G_{ij}|}\sum_{k \in G_{ij}} d_{ijk} \quad \forall ij \in E.
\end{equation}
\end{lemma}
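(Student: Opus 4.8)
The plan is to reduce the averaged bound to a pointwise estimate on each \emph{verifiable} good cycle, and then average. Concretely, I will show that for every good cycle $ijk$ guaranteed by the $p_v$-cycle-verifiability condition one has $d_{ijk} \ge \tfrac13 s_{ij}^*$. Since there are at least $p_v|G_{ij}|$ such cycles, and $d_{ijk} \ge 0$ for the remaining good cycles (being a ratio of nonnegative quantities), summing over $k \in G_{ij}$ and dividing by $|G_{ij}|$ immediately gives \eqref{eq:lemma2}. Thus the whole lemma rests on the pointwise claim; note that when $ij \in E_g$ (so $s_{ij}^* = 0$) this claim is vacuous, so only $ij \in E_b$ is substantive.

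For the pointwise claim I will reuse the combinatorial reformulation from the proof of Lemma \ref{lemma1}, namely $d_{ijk} = (|A_{i,jk}^{\text{bad}}| + |A_{j,ki}^{\text{bad}}| + |A_{k,ij}^{\text{bad}}|)/(n_i + n_j + n_k)$. The core step is a lower bound on the numerator, $|A_{i,jk}^{\text{bad}}| + |A_{j,ki}^{\text{bad}}| + |A_{k,ij}^{\text{bad}}| \ge m_{ij}^{\text{bad}}$, which I will obtain by constructing an injective map from $U_{ij}^{\text{bad}}$ into $A_{i,jk}^{\text{bad}} \cup A_{j,ki}^{\text{bad}} \cup A_{k,ij}^{\text{bad}}$. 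I split $U_{ij}^{\text{bad}}$ into three cases according to whether a bad universal keypoint $l$ lies in $h(I_i) \cap h(I_j)$, in $h(I_i) \setminus h(I_j)$, or in $h(I_j) \setminus h(I_i)$. In each case verifiability supplies a keypoint in $I_k$ matching the relevant keypoint(s), and since $ik, jk \in E_g$ these matches are ground-truth correct, so $h$ is preserved along them. This is exactly the ``verification'' mechanism of the supplemental clarification: the correct matches through image $k$ expose the corruption sitting on the edge $ij$. The case $l \in h(I_i) \cap h(I_j)$ produces a tuple in $A_{k,ij}^{\text{bad}}$ (a point of $I_k$ matching both $a_l \in I_i$ and $b_l \in I_j$, which themselves fail to match), while the two one-sided cases produce tuples in $A_{i,jk}^{\text{bad}}$ and $A_{j,ki}^{\text{bad}}$ respectively. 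Because the label $l=h(\cdot)$ is recoverable from any image of the map, and because the three target sets are pairwise disjoint, the map is injective.

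To control the denominator I will bound $n_i + n_j + n_k \le 3 m_{ij}$. Each of $n_i = |A_{i,jk}|$ and $n_j = |A_{j,ki}|$ is at most $m_i, m_j \le m_{ij}$, since every tuple is determined by its unique center keypoint in $I_i$ or $I_j$. For $n_k = |A_{k,ij}|$, any center $c \in I_k$ contributing to it matches a point of $I_i$ and a point of $I_j$ through the good edges $ik, jk$, forcing $h(c) \in h(I_i) \cap h(I_j) \subseteq U_{ij}$; injectivity of $h$ on $I_k$ then gives $n_k \le m_{ij}$. Combining the two estimates yields $d_{ijk} \ge m_{ij}^{\text{bad}}/(3 m_{ij}) = \tfrac13 s_{ij}^*$, which is the pointwise claim.

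I expect the construction and injectivity of the map $U_{ij}^{\text{bad}} \to A_{i,jk}^{\text{bad}} \cup A_{j,ki}^{\text{bad}} \cup A_{k,ij}^{\text{bad}}$ to be the main obstacle: one must carefully verify, in each of the three cases, that the keypoint supplied by verifiability actually yields a \emph{bad} tuple (i.e.\ the two non-central keypoints genuinely fail to match), which is where the good-edge hypotheses $ik, jk \in E_g$ and the injectivity of $h$ on each keypoint set are essential. The denominator bound and the final averaging are routine by comparison.
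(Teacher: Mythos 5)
Your proof is correct, but it takes a genuinely different route from the paper's. The paper never proves a per-cycle bound: it constructs an injection $g_{ij,k}$ from $U_{ij}^{\text{bad}} \cap h(I_k)$ into the bad tuples, giving $|U_{ij}^{\text{bad}} \cap h(I_k)| \le N_{ijk}^{\text{bad}}$ per cycle, then uses verifiability only in aggregate via $\sum_{k \in G_{ij}} 1_{\{x \in h(I_k)\}} \ge p_v|G_{ij}|$ to get $\sum_{k \in G_{ij}} N_{ijk}^{\text{bad}} \ge p_v|G_{ij}|\,m_{ij}^{\text{bad}}$, and finally converts this sum-level bound into a bound on $\sum_k d_{ijk}$ by a Jensen/chord argument for the concave function $F(x) = x/(x+3m_{ij}^{\text{good}})$, which requires the two auxiliary estimates $n_{\Delta} \le m_{ij}^{\text{good}}$ and $N_{ijk}^{\text{bad}} \le 3m_{ij}^{\text{bad}}$. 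You instead exploit the full strength of Definition \ref{def:cycle_verifiable} on each verifiable cycle: there, \emph{every} keypoint of $I_i \cup I_j$ has a match in $I_k$, so $U_{ij}^{\text{bad}} \subseteq h(I_k)$ and your injection captures all of $U_{ij}^{\text{bad}}$ at once, giving $N_{ijk}^{\text{bad}} \ge m_{ij}^{\text{bad}}$ pointwise; combined with your direct denominator bound $n_i + n_j + n_k \le 3m_{ij}$ (whose $n_k \le m_{ij}$ step, via $h(c) \in h(I_i) \cap h(I_j)$ for good edges $ik, jk$, is correct and replaces the paper's $n_{\Delta} \le m_{ij}^{\text{good}}$), this yields $d_{ijk} \ge \tfrac13 s_{ij}^*$ on every verifiable cycle, and the averaging is then trivial. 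Your version is shorter, avoids the concavity machinery entirely, and even delivers a stronger per-cycle conclusion; what the paper's aggregated route buys in exchange is robustness to a weaker hypothesis---it would survive a ``per-keypoint'' verifiability condition in which each $x \in U_{ij}$ is visible in $p_v|G_{ij}|$ good cycles that may differ across $x$, whereas your pointwise argument is tied to the per-cycle form of Definition \ref{def:cycle_verifiable} (which is, however, exactly what the paper assumes, so nothing is lost here). Two small points to tighten in writing it up: your case assignment for $l \in h(I_i) \cap h(I_j)$ does land in $A_{k,ij}^{\text{bad}}$, but only if you take the canonical ground-truth pair $a_l, b_l$ with $h(a_l) = h(b_l) = l$ (which fail to match precisely because $l$ is bad and $\bX_{ij}$ is a partial permutation), rather than whatever wrong match $a_l$ may participate in; and the phrase ``the three target sets are pairwise disjoint'' is unnecessary---injectivity into the disjoint union already follows from recovering $l = h(c)$ from the $I_k$-coordinate, and $N_{ijk}^{\text{bad}}$ sums the three cardinalities separately in any case.
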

\begin{proof}
We assume several cases.

\noindent {\bf Case I:} \boldsymbol{$ij \in E_g$}.
The left hand side of \eqref{eq:lemma2} is zero and its right hand side (RHS) is also zero since for any $ij \in E_g$ and $k \in G_{ij}$, $d_{ijk}=0$.

\noindent {\bf Case II:} \boldsymbol{$ij \in E_b$} {\bf and} $\boldsymbol{k \in G_{ij}}$.
Denote
$$N_{ijk}^{\text{bad}} = |A_{i,jk}^{\text{bad}}| + |A_{j,ki}^{\text{bad}}| +  |A_{k,ij}^{\text{bad}}|,$$ and note that in view of \eqref{eqn:lemma1.2}
\begin{equation}
\label{eqn:lemma2.4}
\sum_{k \in G_{ij}} d_{ijk} = \sum_{k \in G_{ij}} \frac{N_{ijk}^{\text{bad}}}{N_{ijk}^{\text{bad}} + 3n_{\Delta}}.
\end{equation}
We thus need to lower bound the RHS of \eqref{eqn:lemma2.4} in order to conclude \eqref{eq:lemma2}.

We first derive the bound $n_{\Delta} \le m_{ij}^{\text{good}}$. Figure \ref{fig:Lemma2_1} demonstrates the definitions below and the desired bound in a very special case.
Let $D_{ijk}$ denote the set of indices of diagonal entries of $\bX_{ij} \bX_{jk} \bX_{ki}$ that equal 1.
Note that $|D_{ijk}| = n_{\Delta}$ due to the fact that $n_{\Delta} = \tr(\bX_{ij} \bX_{jk} \bX_{ki})$.
Also, since $\bX_{ij} \bX_{jk} \bX_{ki}$ is of size $m_i \times m_i$, $D_{ijk} \subseteq [m_i]$.
We can thus assign for $d \in D_{ijk}$, $p_{i,d} = a \in I_i$. Because $\bX_{ij} \bX_{jk} \bX_{ki}(d,d)=1$, there exists $1 \le u \le m_j$ and $1 \le v \le m_k$ such that $\bX_{ij}(d,u) = \bX_{jk}(u,v) = \bX_{ki}(v,d) = 1$. Therefore, we note that for $b := p_{j,u} \in I_j$ and $c := p_{k,v} \in I_k$, there exist matches between $a$ and $b$, $b$ and $c$, as well as $c$ and $a$. Since $k \in G_{ij}$, $jk \in E_g$ and $ki \in E_g$ and thus $h(b) = h(c) = h(a)$ (see the same argument in the paragraph below \eqref{eq:hc_property}, where it is enough to just assume that either $jk \in E_g$ or $ki \in E_g$); we denote the latter common value by $l$. Therefore $\bP_i^*(a,l) = \bP_j^*(b,l) = \bX_{ij}(a,b) = 1$. By definition of $U_{ij}^{\text{good}}$, $l \in U_{ij}^{\text{good}}$.
Let $f_{ijk}$ be a function from $D_{ijk}$ to $U_{ij}^{\text{good}}$ such that $f_{ijk}(d) = l$.
We note that it is injective since for any $d \not = d' \in D_{ijk}$, $p_{i,d} \not = p_{i,d'}$, therefore $h(p_{i,d}) \not = h(p_{i,d'})$. By the cardinality property of an injective map, $n_{\Delta} \le |U_{ij}^{\text{good}}| = m_{ij}^{\text{good}}$.

\begin{figure}
    \centering
    \includegraphics[width=8cm]{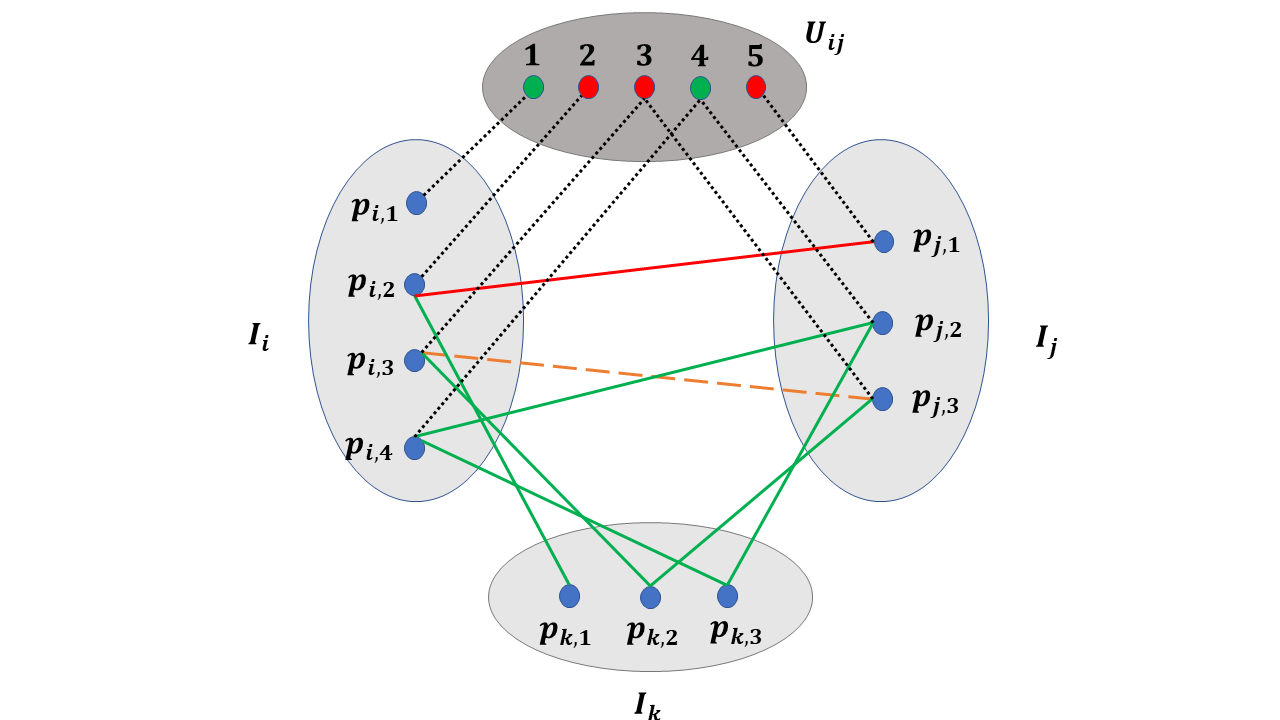}
    \caption{ An illustration of $n_{\Delta}$, $D_{ijk}$, $f_{ijk}$ and $m_{ij}^{\text{good}}$.
    Note that $n_{\Delta}$ is equal to the number of green triangles with vertices in $I_i, I_j, I_k$. Since there is only one such triangle, $n_{\Delta}=1$. This triangle (with keypoints $p_{i,4},p_{j,2},p_{k,3}$) is associated with the keypoint in $I_i$ with index 4 and thus $D_{ijk}=\{ 4 \}$. Since it is also associated with the universal keypoint with index 4, the function $f_{ijk}$ maps $4$ to $4 \in U_{ij}^{\text{good}}$. At last, note that  $m_{ij}^{\text{good}} = |U_{ij}^{\text{good}}| = 2$. }
    \label{fig:Lemma2_1}
\end{figure}


\begin{figure}
    \centering
    \includegraphics[width = 8cm]{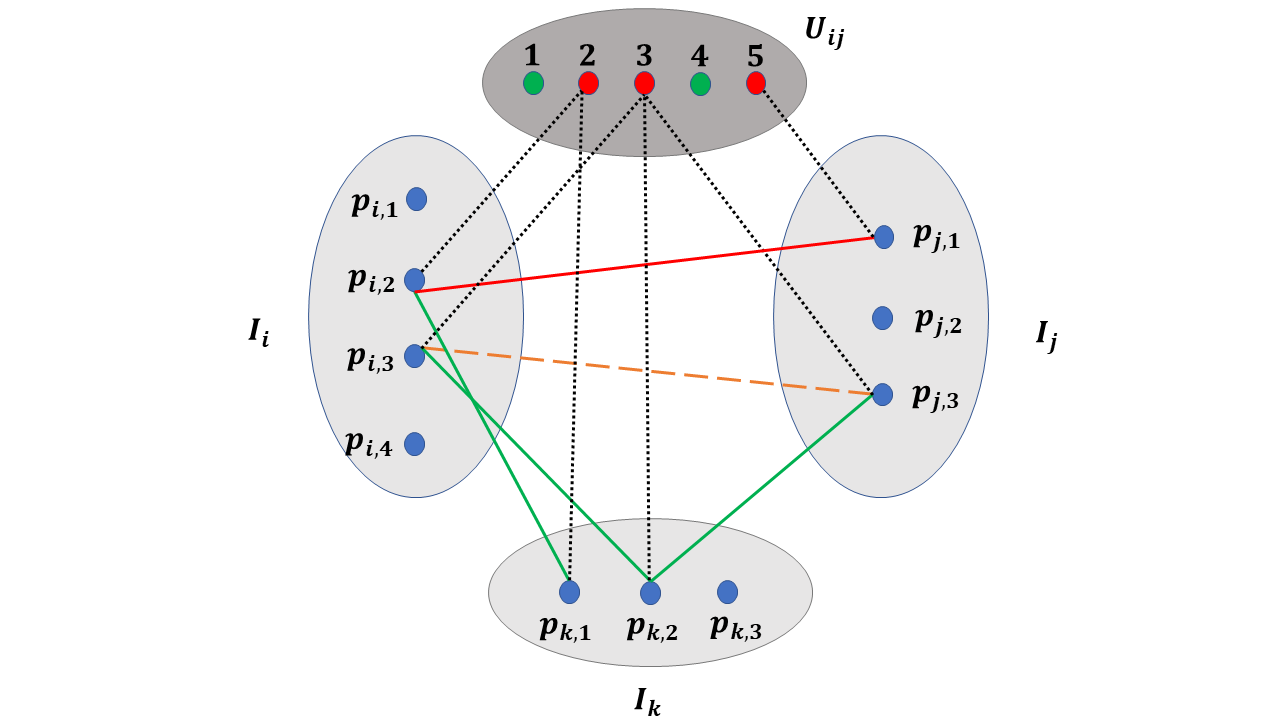}
    \caption{An illustration of $g_i$, $g_{ij,k}$ and the injectivity of both functions in a very special case. In this example, the only element of $A_{i,jk}^{\text{bad}}$ is $(p_{i,2},p_{j,1},p_{k,1})$. Since we defined $g_i(a,b,c) = h(a)$, we obtain that $g_i(p_{i,2},p_{j,1},p_{k,1}) = h(p_{i,2}) = 2 \in U_{ij}^{\text{bad}}$.
    Note that $U_{ij}^{\text{bad}} \cap h(I_k) = \{ 2,3 \}$. Recall that the function $g_{ij,k}$ maps $x \in U_{ij}^{\text{bad}} \cap h(I_k)$ to the bad $(j,ki)$ tuple or bad $(k,ij)$ tuple that involves $c \in I_k$ with $h(c) = x$. In this example, the function $g_{ij,k}$ maps $2 \in U_{ij}$ to the bad $(i,jk)$ tuple $(p_{i,2},p_{j,1},p_{k,1})$. It maps $3 \in U_{ij}$ to the bad $(k,ij)$ tuple $(p_{k,2},p_{i,3},p_{j,3})$.}
    \label{fig:Lemma2_2}
\end{figure}

Next, We prove an upper bound of $N_{ijk}^{\text{bad}}$.
We assume without loss of generality that  $(a,b,c) \in A_{i,jk}^{\text{bad}}$.
Since $ik,jk \in E_g$, the matches from $a$ to $c$ and from $b$ to $c$ are correct. Therefore, the match from $a$ to $b$ is wrong and $h(a) \in U_{ij}^{\text{bad}}$. Denote by $g_i: A_{i,jk}^{\text{bad}} \rightarrow U_{ij}^{\text{bad}}$ the function which maps  $(a,b,c) \in A_{i,jk}^{\text{bad}}$ to $h(a) \in U_{ij}^{\text{bad}}$. Figure \ref{fig:Lemma2_2} illustrates $g_i$ in a special case. This function is injective since
for any $x \in U_{ij}^{\text{bad}}$, $g_i^{-1}(x)$ contains at most one element $(a,b,c)\in A_{i,jk}^{\text{bad}}$.
Indeed, if $g_i(a,b,c) = x$, then there must exist $a \in I_i$ such that $h(a) = x$, $c$ in $I_k$ such that there is a match between $a$ and $c$, and $b$ in $I_j$ such that there is a match between $a$ and $b$ (and no match between $b$ and $c$).
Note that there is a match between at most one keypoint in $I_k$ and $a$ and thus there is at most one such  $c$.
Similarly, there is at most one such $b$.
Since there is at most one keypoint in $I_i$ which corresponds to the 3D keypoint $x$, there is at most one such $a$. 
The  injectivity of $g_i$ implies $|A_{i,jk}^{\text{bad}}| \le |U_{ij}^{\text{bad}}| = m_{ij}^{\text{bad}}$. Similarly, $|A_{j,ki}^{\text{bad}}| \le  m_{ij}^{\text{bad}}$ and $|A_{k,ij}^{\text{bad}}| \le m_{ij}^{\text{bad}}$.  Thus, for any $k \in G_{ij}$
\begin{equation}
    \label{eqn:lemma2.1}
    0 \le N_{ijk}^{\text{bad}} \le 3 m_{ij}^{\text{bad}}.
\end{equation}

Next, we establish a lower bound of $N_{ijk}^{\text{bad}}$.
For this purpose, we construct an injective map $g_{ij,k}$ from $U_{ij}^{\text{bad}} \cap h(I_k)$ to $A_{i,jk}^{\text{bad}} \cup A_{j,ki}^{\text{bad}} \cup A_{k,ij}^{\text{bad}}$. It will allow us to lower bound $N_{ijk}^{\text{bad}} = |A_{i,jk}^{\text{bad}}| + |A_{j,ki}^{\text{bad}}| + |A_{k,ij}^{\text{bad}}|$ by the cardinality of $U_{ij}^{\text{bad}} \cap h(I_k)$. Note that $U_{ij}^{\text{bad}} \subseteq U_{ij} = h(I_i) \cup h(I_j)$.
Therefore any element of $U_{ij}^{\text{bad}} \cap h(I_k)$ is either in $h(I_i)$ or $h(I_j) \backslash h(I_i)$.

In the case where $x \in U_{ij}^{\text{bad}} \cap h(I_k)$ and $x \in h(I_i)$, we will show that there exist either $(a,b,c) \in A_{i,jk}^{\text{bad}}$ or $(c,a,b) \in A_{k,ij}^{\text{bad}}$ such that $h(c) = x$.
In the case where $x \in U_{ij}^{\text{bad}} \cap h(I_k)$ and $x \in h(I_j) \backslash h(I_i)$, then one can similarly show that there exists either $(b,c,a) \in A_{j,ki}^{\text{bad}}$ or $(c,a,b) \in A_{k,ij}^{\text{bad}}$ such that $h(c) = x$. These arguments induce a map $g_{ij,k}$ from $U_{ij}^{\text{bad}} \cap h(I_k)$ to $A_{i,jk}^{\text{bad}} \cup A_{j,ki}^{\text{bad}} \cup A_{k,ij}^{\text{bad}}$ which maps $x$ to its corresponding bad tuple. Since $h(c) = x$, $g_{ij,k}$ is injective. Figure \ref{fig:Lemma2_2} illustrates $g_{ij,k}$ in a special case.

We thus assume that $x \in U_{ij}^{\text{bad}} \cap h(I_k)$ and $x \in h(I_i)$. Note that the latter requirement implies the existence of $a \in I_i$ such that $h(a) = x$.
Since $x \in h(I_k)$, there exists $c \in I_k$ such that $h(c) = x$ and since $ik \in E_g$ there is a good match between $a$ and $c$.
Note that there cannot be a good match between $a$ and any $b \in I_j$, otherwise  $x \not \in U_{ij}^{\text{bad}}$. Therefore, there are two cases to consider.
In the first case there exists $b \in I_j$ such that there is a wrong match between $a$ and $b$. This implies that $h(b) \neq h(a)$ and since we showed above that $h(a) = h(c)$, we conclude that $h(b) \neq h(c)$. The latter observation and the fact that $jk \in E_g$ imply that there is no match between $b$ and $c$ and thus $(a,b,c)$ is a bad $(i,jk)$ tuple, that is, $(a,b,c) \in A_{i,jk}^{\text{bad}}$.
In the second case, there exists $b \in I_j$ such that $h(b) = h(a)$, but there is no match between $b$ and $a$
(the previous case considered the scenario where there exists $b \in I_j$ such that $a$ and $b$ match; furthermore, if $h(a) \neq h(b)$ for all $b \in I_j$, then $x \in U_{ij}^{\text{good}}$).
Since $h(a) = h(c) = h(b)$ and $jk \in E_g$, there is a match between $b$ and $c$. Therefore, $(c,a,b)$ is a bad $(k,ij)$ tuple, that is, $(c,a,b) \in A_{k,ij}^{\text{bad}}$. Following the above ideas, this concludes the injectivity of $g_{ij,k}$.
This injectivity implies
\begin{align}
\label{eqn:lemma2.2}
    \sum_{x \in U_{ij}^{\text{bad}}} 1_{\{x \in h(I_k)\}} & = |U_{ij}^{\text{bad}} \cap h(I_k)|  \le |A_{i,jk}^{\text{bad}} \cup A_{j,ki}^{\text{bad}} \cup A_{k,ij}^{\text{bad}}| \nonumber \\ & \le |A_{i,jk}^{\text{bad}}| + |A_{j,ki}^{\text{bad}}| +  |A_{k,ij}^{\text{bad}}| = N^{\text{bad}}_{ijk}.
\end{align}

In order to apply \eqref{eqn:lemma2.2} we lower bound a certain sum of $1_{\{x \in h(I_k)\}}$. Our argument assumes that $x \in U_{ij}^{\text{bad}}$. Since $x \in U_{ij}$, we conclude  WLOG that $x \in h(I_i)$. Therefore there exists $a \in I_i$ such that $h(a) = x$. By the $p_v$-cycle verifiability condition, $a$ is verifiable w.r.t. $ij$ in at least $p_v|G_{ij}|$ good cycles. For any such cycle $ijk$ that $a$ is verifiable in, let $b \in I_k$ match $a$ (for convenience, we demonstrate $x$, $a$ and $b$ in Figure \ref{fig:Lemma2_3}). Since $ik \in E_g$, the match between $a$ and $b$ is a good match and thus $x = h(a) = h(b)$. Since $b \in I_k$, $h(b) \in h(I_k)$ and thus $x \in h(I_k)$. That is, we have proved that if $k \in G_{ij}$ and $a$ is verifiable in $ijk$, then $x \in h(I_k)$. We have at least $p_v |G_{ij}|$ such $k$'s and thus
\begin{equation}
\label{eq:bound_with_pv_cycle}
    \sum_{k\in G_{ij}} 1_{\{x \in h(I_k)\}} \ge p_v |G_{ij}| \ \text{ for any } x \in U_{ij}.
\end{equation}

\begin{figure}
    \centering
    \includegraphics[width=8cm]{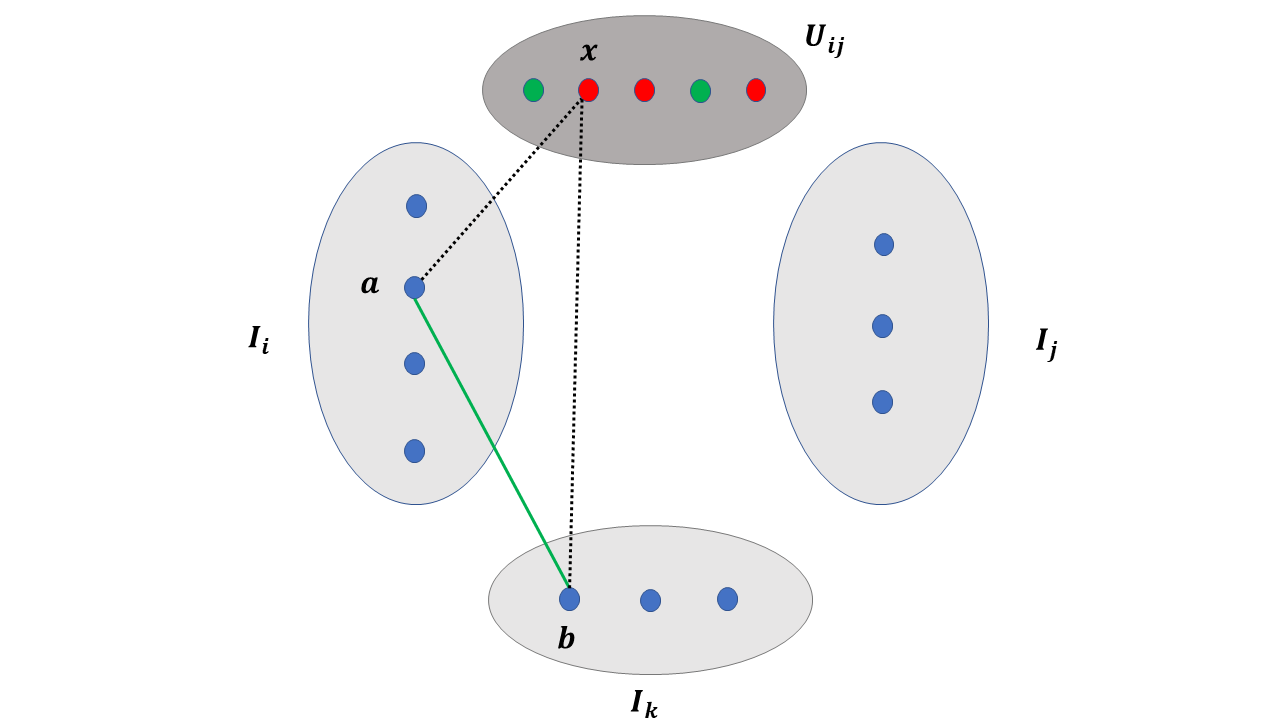}
    \caption{Visual demonstration of keypoints involved in the argument for bounding $\sum_{k\in G_{ij}} 1_{\{x \in h(I_k)\}}$.
    }
    \label{fig:Lemma2_3}
\end{figure}

We combine the above two inequalities as follows. Summing both sides of \eqref{eqn:lemma2.2} over $k \in G_{ij}$,  exchanging the order of summation and applying \eqref{eq:bound_with_pv_cycle}  result in
\begin{equation}
\label{eqn:lemma2.3}
p_v |G_{ij}| m_{ij}^{\text{bad}} = p_v |G_{ij}| |U_{ij}^{\text{bad}}| \le \sum_{k\in G_{ij}} N_{ijk}^{\text{bad}}.
\end{equation}

Using the above bound we will bound from below
$$\sum_{k \in G_{ij}} \frac{N_{ijk}^{\text{bad}}}{N_{ijk}^{\text{bad}} + 3m_{ij}^{\text{good}}}$$
and we will then use \eqref{eqn:lemma2.4} to conclude the desired inequality.
We denote
\begin{equation}
\label{eq:F_at_3mijbad}
    F(x) = \frac{x}{x+\gamma} \ \text{ where } \gamma = 3m_{ij}^{\text{good}}.
\end{equation}
Note that $F(0) = 0$,
$$F(3m_{ij}^{\text{bad}}) = \frac{3m_{ij}^{\text{bad}}}{3m_{ij}^{\text{bad}} + 3m_{ij}^{\text{good}}} = s_{ij}^*$$
and $F(x)$ is concave. Applying the definition of $F$, Jensen's inequality, \eqref{eqn:lemma2.3} and  \eqref{eq:F_at_3mijbad} yield
\begin{align*}
\sum_{k \in G_{ij}} \frac{N_{ijk}^{\text{bad}}}{N_{ijk}^{\text{bad}} + 3m_{ij}^{\text{good}}}
&= \sum_{k \in G_{ij}} F(N_{ijk}^{\text{bad}})\\
& \ge \sum_{k \in G_{ij}}((1-\frac{N_{ijk}^{\text{bad}}}{3m_{ij}^{\text{bad}}}) F(0) + \frac{N_{ijk}^{\text{bad}}}{3m_{ij}^{\text{bad}}} F(3m_{ij}^{\text{bad}}))\\
&= \sum_{k \in G_{ij}} \frac{N_{ijk}^{\text{bad}}}{3m_{ij}^{\text{bad}}} F(3m_{ij}^{\text{bad}})\\
&\ge \frac{1}{3} p_v |G_{ij}| F(3m_{ij}^{\text{bad}}) = \frac{1}{3} p_v |G_{ij}| s_{ij}^*.
\end{align*}
The combination of this inequality with \eqref{eqn:lemma2.4} concludes the proof of the lemma.
\end{proof}

\subsection{Conclusion of Theorem \ref{thm:main}}
We prove the main theorem by induction, using Lemmas \ref{lemma1} and \ref{lemma2}. For $t=0$, the definition of $s_{ij}^{(0)}$, Lemma \ref{lemma2} and the definition of $\lambda$ imply that for all $ij \in E$: $$s_{ij}^{(0)} = \frac{\sum_{k \in N_{ij}} d_{ijk}}{|N_{ij}|} \ge \frac{\sum_{k \in G_{ij}} d_{ijk}}{|N_{ij}|} \ge \frac{p_v}{3} \frac{|G_{ij}|}{|N_{ij}|}s_{ij}^* \ge \frac{p_v}{3} (1- \lambda)s_{ij}^*.$$
We further note by using again the above definitions and
the fact that for all $ij \in E$ $0 \le d_{ijk} \le 1$ that for $ij \in E_g$, $$s_{ij}^{(0)} = \frac{\sum_{k \in N_{ij}} d_{ijk}}{|N_{ij}|} = \frac{\sum_{k \in B_{ij}} d_{ijk}}{|N_{ij}|} \le \frac{\sum_{k \in B_{ij}} 1}{|N_{ij}|} \le \lambda \le \frac{1}{2\beta_0}.$$ Therefore, the theorem is proved when $t=0$.

Next, we assume that the theorem holds for iterations $0,1,\cdots,t$ and show that it also holds for iteration $t+1$. Applying the definition of $s_{ij}^{(t+1)}$, the positivity of the terms in the sum, the induction assumption  $\frac{1}{2\beta_t} \ge \max_{ij\in E_g} s_{ij}^{(t)}$, Lemma \ref{lemma2} and the definition of $\lambda$, we obtain for any $ij \in E_b$
\begin{align}
\label{eqn:mainpf1}
s_{ij}^{(t+1)} & = \frac{\sum_{k \in N_{ij}} e^{- \beta_t (s_{ik}^{(t)} + s_{jk}^{(t)})} d_{ijk}}{\sum_{k \in N_{ij}} e^{- \beta_t (s_{ik}^{(t)} + s_{jk}^{(t)})}} \nonumber \\
& \ge \frac{\sum_{k \in G_{ij}} e^{- \beta_t (s_{ik}^{(t)} + s_{jk}^{(t)})} d_{ijk}}{\sum_{k \in N_{ij}} e^{- \beta_t (s_{ik}^{(t)} + s_{jk}^{(t)})}} \nonumber \\
& \ge \frac{\sum_{k \in G_{ij}}e^{-1} d_{ijk}}{|N_{ij}|}  \\
& \ge \frac{p_v}{3e} \frac{|G_{ij}|}{|N_{ij}|}s_{ij}^* \nonumber\\
& \ge \frac{(1-\lambda)p_v}{3e}s_{ij}^* \nonumber.
\end{align}
Note that $xe^{-\alpha x} \le \frac{1}{\alpha e}$ for any $\alpha>0$ and $x \ge 0$. In particular, for $\alpha = \beta_t (1-\lambda)p_v/3e$ and $x = s_{ik}^* + s_{jk}^*$,
\begin{equation}
\label{eq:bound_1overea}
e^{-\beta_t (s_{ik}^* + s_{jk}^*) (1-\lambda)p_v/3e} (s_{ik}^* + s_{jk}^*) \le \frac{3}{\beta_t (1-\lambda)p_v}.
\end{equation}
Applying the definition of $s_{ij}^{(t+1)}$, the fact that $d_{ijk} = 0$ for any $ij \in E_g$ and $k \in G_{ij}$,
Lemma \ref{lemma1}, the induction assumption $s_{ij}^{(t)} \ge \frac{(1-\lambda)p_v}{3e}s_{ij}^*$ for $ij \in E_b$ (for the numerator) and the positivity of the relevant terms (for the denominator), the induction assumption $s_{ij}^{(t)} \leq {1}/{(2\beta_t)}$ for all $ij\in E_g$, \eqref{eq:bound_1overea} and the definition of $\lambda$,  we obtain
for all $ij \in E_g$
\begin{align*}
    s_{ij}^{(t+1)} & = \frac{\sum_{k \in N_{ij}} e^{- \beta_t (s_{ik}^{(t)} + s_{jk}^{(t)})} d_{ijk}}{\sum_{k \in N_{ij}} e^{- \beta_t (s_{ik}^{(t)} + s_{jk}^{(t)})}} \\
    & = \frac{\sum_{k \in B_{ij}} e^{- \beta_t (s_{ik}^{(t)} + s_{jk}^{(t)})} d_{ijk}}{\sum_{k \in N_{ij}} e^{- \beta_t (s_{ik}^{(t)} + s_{jk}^{(t)})}} \\
    &\le \frac{\sum_{k \in B_{ij}} e^{- \beta_t (s_{ik}^{(t)} + s_{jk}^{(t)})} m(s_{ik}^*+s_{jk}^*)}{\sum_{k \in N_{ij}} e^{- \beta_t (s_{ik}^{(t)} + s_{jk}^{(t)})}} \\
    & \le \frac{\sum_{k \in B_{ij}} e^{- \beta_t (s_{ik}^* + s_{jk}^*)\frac{(1-\lambda) \, p_v}{3e}} m(s_{ik}^*+s_{jk}^*)}{\sum_{k \in G_{ij}} e^{- \beta_t (s_{ik}^{(t)} + s_{jk}^{(t)})}}\\
    & \le \frac{m \sum_{k \in B_{ij}} e^{- \beta_t (s_{ik}^* + s_{jk}^*)\frac{(1-\lambda) \, p_v}{3e}} (s_{ik}^*+s_{jk}^*)}{|G_{ij}|e^{-1}}\\
    & \le \frac{m \sum_{k \in B_{ij}} \frac{3}{\beta_t (1-\lambda)p_v}}{|G_{ij}|e^{-1}} \\
    & = \frac{3m|B_{ij}|}{|G_{ij}|e^{-1}} \cdot \frac{1}{\beta_t (1-\lambda)p_v}\\
    & \le \frac{6em\lambda}{(1-\lambda)^2p_v} \cdot \frac{1}{2\beta_t}.
\end{align*}
We note that the assumption $\lambda < 1+\frac{3em}{p_v}-\sqrt{\frac{3em}{p_v}(2+\frac{3em}{p_v})}$ is equivalent with $\frac{6em\lambda}{(1-\lambda)^2p_v} < 1$. Therefore by taking $\beta_{t+1} = r\beta_t$ with $1 < r < \frac{(1-\lambda)^2p_v}{6em\lambda}$, we guarantee that for any $ij \in E_g$, $s_{ij}^{(t+1) } \le \frac{1}{2\beta_{t+1}}$, that is, $\max_{ij \in E_g} s_{ij}^{(t+1) } \le \frac{1}{2\beta_{t+1}} = \frac{1}{2\beta_0 r^t}$. This implication and \eqref{eqn:mainpf1} conclude the proof of the theorem.

\section{Discussion of a Possible Theoretical Extension}

Although our current analysis assumes no noise on the set of good edges, one can relax this assumption. Indeed, one can assume sufficiently small noise on good edges so that for all cycles $ijk$ and a sufficiently small positive constant $\delta$: $|d_{ijk}' - d_{ijk}| < \delta$, where $d_{ijk}$ and $d_{ijk}'$ are respectively the cycle inconsistencies with and without noise on good edges. Using a basic perturbation analysis, similarly as in the proof of Theorem \ref{thm:main}, with a carefully chosen set of the reweighting parameters $\beta_t$, one can prove approximate separation of good and bad edges. In particular, the maximum value of the estimated $s_{ij}$ on good edges is proportional to $\delta$.
Removing the bad edges (with estimated $s_{ij}$ larger than this threshold), one can then approximately solve the PPS problem with a subsequent spectral solver.
An approximate recovery theorem for the absolute partial permutations using the filtered edges can be established using spectral graph theory.

\end{document}